\documentclass{article}

\usepackage{microtype}
\usepackage{graphicx}
\usepackage{subcaption}
\usepackage{booktabs} 
\usepackage{multicol}
\usepackage{multirow}
\usepackage{mathtools}

\usepackage{hyperref}

\usepackage[preprint]{icml2026}

\usepackage{amsmath}
\usepackage{amssymb}
\usepackage{mathtools}
\usepackage{amsthm}
\usepackage{booktabs}
\usepackage{siunitx}
\usepackage[table]{xcolor}

\usepackage[capitalize,noabbrev]{cleveref}

\theoremstyle{plain}
\newtheorem{theorem}{Theorem}[section]

\theoremstyle{definition}

\theoremstyle{remark}

\usepackage[textsize=tiny]{todonotes}

\icmltitlerunning{StaTS: Spectral Trajectory Schedule Learning for Adaptive Time Series Forecasting with Frequency Guided Denoiser}

\begin{document}

\twocolumn[
  \icmltitle{StaTS: Spectral Trajectory Schedule Learning for Adaptive Time Series Forecasting with Frequency Guided Denoiser}



  \icmlsetsymbol{equal}{*}

  \begin{icmlauthorlist}
    \icmlauthor{Jintao Zhang}{ustc}
    \icmlauthor{Zirui Liu}{ustc}
    \icmlauthor{Mingyue Cheng}{ustc}
    \icmlauthor{Xianquan Wang}{ustc}
    \icmlauthor{Zhiding Liu}{ustc}
    \icmlauthor{Qi Liu}{ustc}
  \end{icmlauthorlist}

  \icmlaffiliation{ustc}{State Key Laboratory of Cognitive Intelligence, University of Science and Technology of China, Hefei, China}

  \icmlcorrespondingauthor{Qi Liu}{qiliuql@ustc.edu.cn}
  \icmlcorrespondingauthor{Mingyue Cheng}{mycheng@ustc.edu.cn}
  \icmlcorrespondingauthor{Jintao Zhang}{zjttt@mail.ustc.edu.cn}

  \icmlkeywords{Machine Learning, ICML}

  \vskip 0.3in
]



\printAffiliationsAndNotice{}  

\begin{abstract}

Diffusion models have been used for probabilistic time series forecasting and show strong potential. However, fixed noise schedules often produce intermediate states that are hard to invert and a terminal state that deviates from the near noise assumption. Meanwhile, prior methods rely on time domain conditioning and seldom model schedule induced spectral degradation, which limits structure recovery across noise levels. We propose StaTS, a diffusion model for probabilistic time series forecasting that learns the noise schedule and the denoiser through alternating updates. StaTS includes Spectral Trajectory Scheduler (STS) that learns a data adaptive noise schedule with spectral regularization to improve structural preservation and stepwise invertibility, and Frequency Guided Denoiser (FGD) that estimates schedule induced spectral distortion and uses it to modulate denoising strength for heterogeneous restoration across diffusion steps and variables. A two stage training procedure stabilizes the coupling between schedule learning and denoiser optimization. Experiments on multiple real world benchmarks show consistent gains, while maintaining strong performance with fewer sampling steps. Our code is available at \url{https://github.com/zjt-gpu/StaTS/}.

\end{abstract}

\vspace{-0.25in}

\section{Introduction}

\begin{figure}[t]
  \centering
  \includegraphics[width=\linewidth]{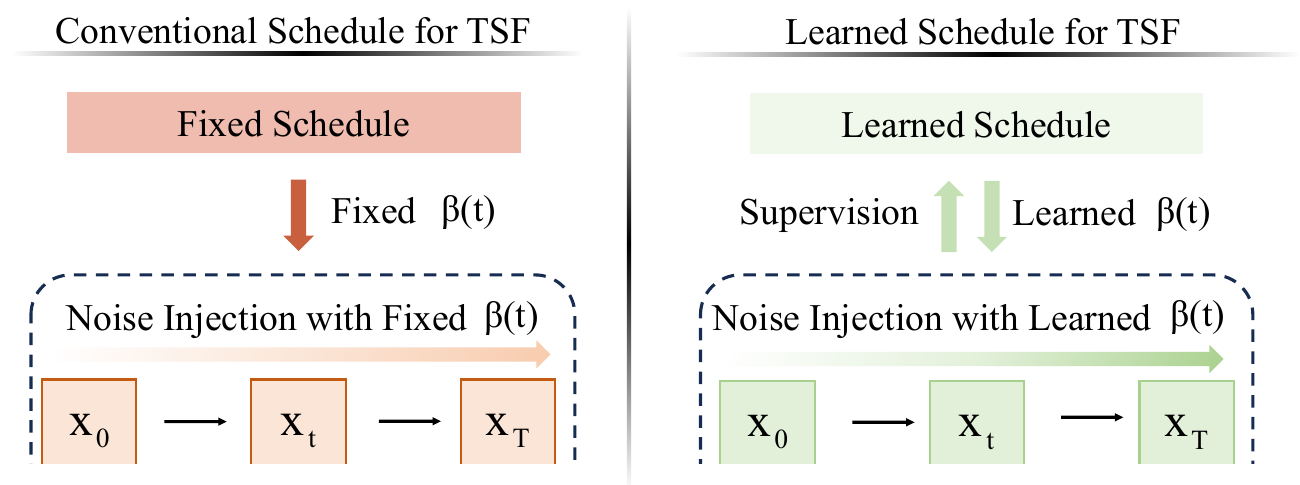}
  \caption{Conventional diffusion relies on fixed schedule, whereas our approach learns a data adaptive schedule, yielding more distinguishable intermediate states and better controlled terminal noise.
  }
  \label{fig:intro1}
  \vspace{-0.25in}
\end{figure}

Time series forecasting is vital across many domains~\cite{intro101}, including finance~\cite{intro1} and healthcare~\cite{cheng2024hmf}, where forecast accuracy directly affects decision making and risk management~\cite{intro3}, the ability to quantify future uncertainty by producing a distribution over plausible outcomes is essential.

In recent years, generative frameworks such as denoising diffusion probabilistic models (DDPMs) have been introduced to time series forecasting, enabling the modeling of complex conditional distributions via a Markovian process governed by a diffusion schedule~\cite{ddpm,intro5}. The schedule shapes the distribution of intermediate noisy states and therefore directly affects their denoisability~\cite{R24}. Without explicit constraints on the schedule, (i) intermediate noisy states may undergo spectral collapse or show weak spectral separation across steps, which hinders the denoiser from learning a stable stepwise inverse mapping~\cite{intro105}. (ii) The terminal noise level may depart from the diffusion assumption of being close to pure noise, which amplifies the mismatch between training and inference state distributions and yields unstable inversion~\cite{R23}. Therefore, an appropriate noise schedule for each dataset is crucial for the diffusion based forecasting~\cite{ant}.

Meanwhile, improving the denoiser ability to recover structure from noisy sequences is equally important~\cite{intro102,intro3}. Most diffusion based forecasting methods mainly rely on conditional inputs in the time domain to enhance representations~\cite{progen,timedart}, yet they do not explicitly characterize how trends, periodic components, and stochastic noise are degraded across different frequency bands. Modeling in the frequency domain separates these components into an interpretable and controllable spectral space~\cite{fredomain}, offering a more global condition for denoising.

To address these issues, we propose StaTS, which jointly optimizes the forward corruption trajectory and the reverse denoising process for diffusion forecasting.
As shown on the right of Fig.~\ref{fig:intro1}, the Spectral Trajectory Scheduler (STS) learns a data adaptive noise schedule rather than relying on fixed linear or cosine templates.
STS regularizes schedule learning with lightweight spectral constraints that encourage a flat terminal noise state, a smooth evolution of spectral flatness over steps, and a well behaved schedule that avoids degenerate near zero noise levels, while being directly optimized for forecasting performance. On the reverse process, the Frequency Guided Denoiser (FGD) explicitly estimates schedule induced spectral damage and uses it as a guidance signal to modulate denoising strength, enabling adaptive allocation of restoration effort across steps and noise levels. Experiments demonstrate that StaTS achieves consistent gains in probabilistic forecasting performance.


\begin{itemize}
    \item We propose StaTS, a diffusion based framework for probabilistic time series forecasting that couples noise scheduling with denoising to better align corruption and restoration.

    \item We develop STS that learns a data adaptive diffusion schedule regularized by spectral objectives, and FGD that estimates schedule induced spectral distortion and uses it to modulate denoising strength.
    
    \item Extensive experiments show that StaTS outperforms strong baselines, and maintains high accuracy and uncertainty quality under reduced sampling steps.

\end{itemize}

\begin{figure*}[t]
  \centering
  \includegraphics[width=\textwidth]{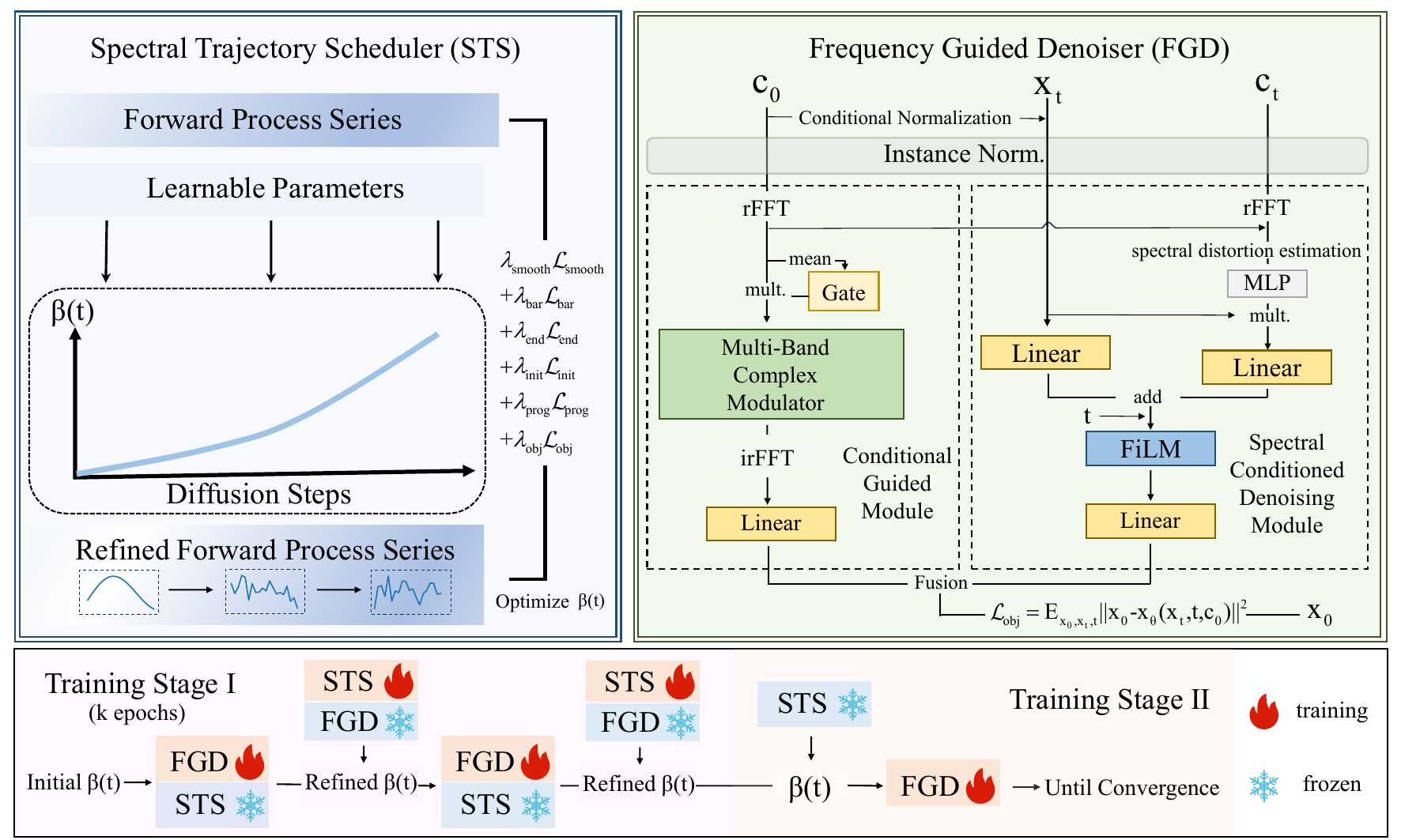}
     \caption{\textbf{Overview of StaTS.} The \textbf{Spectral Trajectory Scheduler (STS)} learns an adaptive variance schedule $\beta(t)$ to govern forward corruption and regularize the \emph{spectral flatness trajectory} over diffusion time. The \textbf{FGD} includes a conditional guidance module for encoding the history and a frequency denoising module for frequency-aware denoising under noise corruption. StaTS is trained in two stages: we alternately update STS and FGD for $k$ epochs, then freeze STS and train FGD to convergence with the learned schedule.}
  \label{fig:mainpic}
  \vspace{-0.15in}
\end{figure*}


\section{Problem Statement}

Let $\mathbf{X}\in\mathbb{R}^{(L+H)\times d}$ be a multivariate time series with history window $\mathbf{c}_0=\mathbf{X}^{1:L}$ and target window $\mathbf{x}_0=\mathbf{X}^{L+1:L+H}$. We model the conditional predictive distribution $p(\mathbf{x}_0\mid \mathbf{c}_0)$ with a $T$-step conditional diffusion process $\{\mathbf{x}_t\}_{t=0}^{T}$, where $\alpha_t=1-\beta_t$ and $\bar{\alpha}_t=\prod_{s=1}^{t}\alpha_s$. 

The forward process is defined as
\begin{equation}
q(\mathbf{x}_t\mid \mathbf{x}_{t-1})
=\mathcal{N}\!\left(\sqrt{\alpha_t}\mathbf{x}_{t-1},(1-\alpha_t)\mathbf{I}\right),
\end{equation}

For conditional generation, we learn a denoiser $\mathbf{x}_\theta(\mathbf{x}_t,t,\mathbf{c}_0)$ that predicts $\mathbf{x}_0$ given the noisy input and the history context. 

The reverse transition is parameterized by
\begin{equation}
p_\theta(\mathbf{x}_{t-1}\mid \mathbf{x}_t,\mathbf{c}_0)
=\mathcal{N}\!\left(\boldsymbol\mu_\theta(\mathbf{x}_t,t,\mathbf{c}_0),\sigma_t^2\mathbf{I}\right),
\end{equation}

We adopt the $\mathbf{x}_0$ prediction objective~\cite{JIT}, as predicting the clean signal $\mathbf{x}_0$ near data manifold is easier to learn and yields more stable optimization than predicting unstructured noise. The optimization objective is:

\begin{equation}
\mathcal{L}_{\mathrm{obj}}=\mathbb{E}_{t,\mathbf{x}_0,\mathbf{x}_t\sim q(\mathbf{x}_t\mid \mathbf{x}_0)}
\Big[\big\|\mathbf{x}_0-\mathbf{x}_\theta(\mathbf{x}_t,t,\mathbf{c}_0)\big\|_2^2\Big].
\end{equation}

This objective defines the learning objective.


\section{Methodology}

As shown in Fig.~\ref{fig:mainpic}, we present the overall architecture of StaTS and outline the two stage training strategy used to optimize STS and FGD.

\subsection{The proposed StaTS}
We propose a diffusion forecasting framework StaTS that comprises the Spectral Trajectory Scheduler (STS) module and the Frequency Guided Denoiser (FGD) module. 

\subsubsection{Spectral Trajectory Scheduler}

Unlike standard diffusion models with a predefined schedule, we parameterize $\beta_t$ as a learnable function of time and \textbf{optimize the noising trajectory to produce corrupted states that are maximally amenable to accurate denoising}. The forward noising chain is governed by a learnable trajectory schedule.

Proposition 1: Validity of the adaptive schedule~\cite{ddpm}.
Assuming $\beta_t\in(0,1)$ for all $t$, it follows that $\alpha_t=1-\beta_t\in(0,1)$ and hence
$\bar{\alpha}_t=\prod_{s=1}^{t}\alpha_s\in(0,1)$.
Moreover, $\{\bar{\alpha}_t\}_{t=1}^{T}$ is strictly decreasing since
\begin{equation}
\bar{\alpha}_{t+1}=\bar{\alpha}_t\,\alpha_{t+1}<\bar{\alpha}_t.
\end{equation}

We parameterize the variance schedule as a function of the diffusion step.
For each $t\in\{1,\ldots,T\}$, we encode the step using the step embedding $\mathbf{s}_t\in\mathbb{R}^{d_s}$,
and predict the noise level with a lightweight MLP $f$:
\begin{equation}
\beta(t)=\mathrm{clamp}\!\left(\sigma\!\left(f(\mathbf{s}_t)\right),\ \varepsilon,\ 1-\varepsilon\right),
\end{equation}
which ensures $\beta(t)\in(0,1)$.

\paragraph{Optimizing Target.}
We optimize the scheduler parameters $\phi$ with an auxiliary objective that shapes the corruption trajectory in the frequency domain. Concretely, it (i) promotes a spectrally flat terminal state, (ii) enforces a smooth spectral flatness progression across intermediate steps, and (iii) regularizes the noise schedule to be smooth while avoiding degenerate solutions.

\emph{Boundary Objective.}
To prevent the learned schedule from collapsing to zero, which would confine the denoiser to being trained on nearly identical noise levels, we introduce a lower bound barrier that penalizes excessively small $\beta(t)$:
\begin{equation}
\mathcal{L}_{\mathrm{bar}}
=
-\frac{1}{T-1}\sum_{t=2}^{T}\log(\beta(t)).
\end{equation}

\emph{Endpoint Objectives.}
To obtain a spectrally flat terminal state $\mathbf{x}_T$, we minimize the KL divergence between the normalized spectral mass of $\mathbf{x}_T$ and the uniform distribution.
Given $\mathbf{x}_0$, we obtain the terminal state $\mathbf{x}_T$ and compute its one-sided power spectrum through $\mathrm{rFFT}(\cdot)$.
Let $p_T$ be the normalized spectral mass over $K$ frequency bins,
\begin{equation}
p_T(f)=\frac{S_T(f)}{\sum_{j=1}^{K}S_T(j)},\qquad f=1,\ldots,K,
\end{equation}
where $S_T(f)\ge 0$ denotes the variable-averaged power at frequency bin $f$.
Specifically, we set $u(f)=1/K$ and define the endpoint objective as:
\begin{equation}
\mathcal{L}_{\mathrm{end}}=D_{\mathrm{KL}}(p_T\|u)
=\sum_{f=1}^{K} p_T(f)\log\!\frac{p_T(f)}{u(f)}.
\end{equation}

To prevent an excessively large initial noise level, we penalize the first step variance:
\begin{equation}
\mathcal{L}_{\mathrm{init}}=\beta(1)^2.
\end{equation}

\emph{Spectral Flatness Objective.}
To enlarge the spectral separability along the forward trajectory, we regularize the evolution of spectral flatness. Let $\mathrm{SF}(\cdot)$ be spectral flatness statistic, defined in formula~\ref{flatness}. For diffusion step $t$, we define $\gamma_t=t/T$ and the target flatness is: 
\begin{equation}
\mathrm{Interp}(t)
=
(1-\gamma_t)\,\mathrm{SF}(\mathbf{x}_0)+\gamma_t\,\mathrm{SF}(\mathbf{x}_T).
\end{equation}
We penalize deviations of the forward sample $\mathbf{x}_t$:
\begin{equation}
\mathcal{L}_{\mathrm{prog}}
=
\mathbb{E}_{t,\mathbf{x}_0,\mathbf{x}_t}
\Big[
\big\|
\mathrm{SF}(\mathbf{x}_t)-\mathrm{Interp}(t)
\big\|_2^2
\Big].
\end{equation}

\emph{Schedule Objective.}
We further impose a smoothness penalty and an initialization term,
\begin{equation}
\mathcal{L}_{\mathrm{smooth}}=\sum_{t=2}^{T}(\beta(t)-\beta(t-1))^2.
\end{equation}

In addition, to enhance the modeling capacity of denoiser, we optimize the schedule using the objective $\mathcal{L}_{\mathrm{obj}}$ induced by the frozen FGD, thereby shaping the learned corruption trajectory in a way that improves forecasting performance.

\noindent
The overall STS objective is
\begin{equation}
\small
\begin{aligned}
\mathcal{L}_{\mathrm{STS}}
= {}& \lambda_{\mathrm{bar}}\mathcal{L}_{\mathrm{bar}}
+\lambda_{\mathrm{end}}\mathcal{L}_{\mathrm{end}}
+\lambda_{\mathrm{init}}\mathcal{L}_{\mathrm{init}}\\
& \hspace{-5mm}  +\lambda_{\mathrm{prog}}\mathcal{L}_{\mathrm{prog}}
+\lambda_{\mathrm{smooth}}\mathcal{L}_{\mathrm{smooth}} 
+\lambda_{\mathrm{obj}}\mathcal{L}_{\mathrm{obj}},
\end{aligned}
\end{equation}
where $\{\lambda_{\cdot}\}$ are nonnegative scalar hyperparameters that balance the loss terms.

\subsubsection{Frequency Guided Denoiser}

Given the history context $\mathbf{c}_0$ and the noisy target state $\mathbf{x}_t$ at diffusion step $t$, FGD predicts the clean target $\hat{\mathbf{x}}_0$~\cite{JIT}. Guided by the corruption trajectory learned by STS, FGD leverages the spectral dynamics of the noisy history window as a prior. Specifically, the conditional guidance module distills cues of frequency domain, while the spectral conditioned denoising module recovers structural information through frequency aware denoising. This complementary mechanism synergistically enhances both forecasting accuracy and the fidelity of uncertainty quantification.

\paragraph{Instance Normalization.}
We apply instance normalization using statistics computed from the history window $\mathbf{c}_0$: for each channel $i$, we compute the mean and standard deviation of $\mathbf{c}_0^{(i)}$ over the temporal dimension and use them as conditional statistics to standardize both the history and target windows. The diffusion process is performed in this standardized space, and we invert the transform after denoising to restore the original scale~\cite{revin}.

\paragraph{Conditional Guided Module.}
In parallel, we develop a lightweight frequency module designed to generate a deterministic anchor forecast based on the history window.
\begin{equation}
\hat{\mathbf{x}}_0^{\mathrm{freq}}=\mathbf{x}_{\psi}(\mathbf{c}_0).
\end{equation}
The module operates explicitly in the Fourier domain.
We compute the one sided spectrum for each channel.
\begin{equation}
\mathbf{C}_f=\mathrm{rFFT}(\mathbf{c}_0)\in\mathbb{C}^{F\times d}.
\end{equation}
To distill salient frequencies, an magnitude-based gate $\mathbf{g}_f \in (0,1)^F$ is employed, where $F = \lfloor L/2 \rfloor + 1$ denotes the number of frequency bins. For each bin $f \in \{1, \ldots, F\}$, the log-magnitude statistic is defined as:
\begin{equation}
e(f)=\log\!\Big(1+\mathrm{AvgCh}\big(|\mathbf{C}_f(f,\cdot)|\big)\Big),
\end{equation}
where $\mathrm{AvgCh}(\cdot)$ averages over variables.
Stacking $\{e(f)\}_{f=1}^{F}$ yields $E\in\mathbb{R}^{F}$.
We then apply a learnable per-frequency affine map followed by a sigmoid function:
\begin{equation}
\mathbf{g}_f=\sigma\!\big(a\odot E +b\big)\in(0,1)^F,
\end{equation}
where $a,b\in\mathbb{R}^{F}$ are learnable parameters and $\odot$ denotes the element wise product.

To enable frequency selective modulation of both amplitude and phase, we employ a multi band complex reweighting mechanism.
Let $\{\mathcal{I}_b\}_{b=1}^{B}$ be a partition of the frequency index set $\{1,\ldots,F\}$ into $B$ disjoint bands, and denote $F_b = |\mathcal{I}_b|$.
For each band $b$, we extract the band spectrum
$\mathbf{C}_f^{(b)}=\mathbf{C}_f[\mathcal{I}_b,:]\in\mathbb{C}^{F_b\times d}$
and the corresponding frequency gate
$\mathbf{g}_f^{(b)}=\mathbf{g}_f[\mathcal{I}_b]\in(0,1)^{F_b}$.
We further introduce a learnable complex gain vector $h_b\in\mathbb{C}^{F_b}$ to parameterize bandwise amplitude scaling and phase shifting.
The reweighted spectrum for band $b$ is computed as
\begin{equation}
\tilde{\mathbf{C}}_f^{(b)}
=
\big(h_b \odot \mathbf{g}_f^{(b)}\big)\odot \mathbf{C}_f^{(b)},
\end{equation}
where $\big(h_b \odot \mathbf{g}_f^{(b)}\big)$ is broadcast along the channel dimension.
Concatenating these spectra yields the result:
\begin{equation}
\tilde{\mathbf{C}}_f
=
\mathrm{Concat}_{b=1}^{B}\!\left(\tilde{\mathbf{C}}_f^{(b)}\right)
\in\mathbb{C}^{F\times d}.
\end{equation}
We then apply the inverse to recover a filtered history signal
\begin{equation}
\bar{\mathbf{c}}=\mathrm{iRFFT}(\tilde{\mathbf{C}}_f)\in\mathbb{R}^{L\times d}.
\end{equation}
Finally, a linear projection outputs the prediction $\hat{\mathbf{x}}_0 \in \mathbb{R}^{H \times d}$.
This module combines learnable gating and multi-band complex scaling to form an expressive spectral filter.

\paragraph{Spectral Conditioned Denoising Module.}
Given the noisy target state $\mathbf{x}_t\in\mathbb{R}^{H\times d}$ at diffusion step $t$, the denoiser predicts the clean target as
\begin{equation}
\hat{\mathbf{x}}_0^{\mathrm{diff}}
=
\mathbf{x}_\theta(\mathbf{x}_t,t,\mathbf{c}_0),
\end{equation}
which constitutes the main stochastic generation pathway.
To couple denoising with the schedule induced spectral evolution, we explicitly estimate the frequency-domain distortion on the conditioning history and use it as the guidance.

Let $\mathbf{c}_t$ denote the corrupted history at diffusion step $t$, generated under the same schedule as the noisy target state $\mathbf{x}_t$.
We compute the spectra along the temporal axis using $\mathrm{rFFT}(\cdot)$. We then estimate the spectral distortion as
\begin{equation}
\mathbf{r}_t
=
\mathrm{clip}\!\left(
\frac{\big|\mathrm{rFFT}(\mathbf{c}_t)\big|-\big|\mathrm{rFFT}(\mathbf{c}_0)\big|}
{\big|\mathrm{rFFT}(\mathbf{c}_0)\big|+\epsilon},
\ r_{\min},\, r_{\max}
\right),
\end{equation}
where $\mathbf{r}_t\in\mathbb{R}^{F\times d}$ and $|\cdot|$ denotes the complex magnitude, $\epsilon$ is a small constant, and $\mathrm{clip}(\cdot)$ bounds the ratio.

A lightweight projection MLP maps $\mathbf{r}_t$ into the gate
\begin{equation}
\mathbf{g}_t
=
\sigma\!\big(\mathrm{MLP}(\mathbf{r}_t)\big),
\end{equation}
where $\sigma(\cdot)$ denotes the sigmoid function.
The gate is then broadcast and used to modulate the noisy target window
\begin{equation}
\tilde{\mathbf{x}}_t=\mathbf{x}_t\odot \mathbf{g}_t .
\end{equation}

We implement $\mathbf{x}_\theta$ as a compact conditioned backbone that processes each variable trajectory as an $H$-dimensional vector.
In this form, two Linear layers are applied to the raw and guided inputs, and their outputs are summed to yield the initial hidden state $\mathbf{h}_0$.

The diffusion step information is incorporated through a FiLM modulation mechanism~\cite{film}
\begin{equation}
\mathbf{h}
=
\phi\!\big(\mathrm{FiLM}(\mathbf{h}_0,t)\big),
\end{equation}
where $\phi(\cdot)$ denotes the SiLU activation function. In practice, the FiLM mechanism is applied twice. These FiLM conditioned activations are composed sequentially to yield the adaptive representation of different steps.

Finally, a refinement stage followed by a Linear head yields the denoised prediction $\hat{\mathbf{x}}_0^{\mathrm{diff}} \in \mathbb{R}^{H \times d}$.

\paragraph{Fusion.}
Final prediction is obtained through adaptive combination of conditioned prediction and denoised sample
\begin{equation}
\hat{\mathbf{x}}_0
=
\omega\,\hat{\mathbf{x}}_0^{\mathrm{freq}}
+(1-\omega)\,\hat{\mathbf{x}}_0^{\mathrm{diff}},
\end{equation}
where $\omega\in(0,1)$ is a learnable scalar that controls the relative contribution of the two components. We optimize the FGD network by minimizing $\mathcal{L}_{\mathrm{obj}}$.

\subsection{Training Strategy of StaTS}

A two stage training strategy is adopted to stabilize the coupled optimization of STS and FGD. 
In Stage One, coordinate wise alternating updates are performed for $k$ epochs. With the noise schedule $\beta(t)$ fixed, FGD is trained under the induced forward process. With FGD fixed, STS is optimized to refine $\beta(t)$. This alternation progressively aligns the forward corruption trajectory with the denoiser capacity.
In Stage Two, STS is frozen and the learned $\beta(t)$ is used to define a stationary forward process. FGD is then trained to convergence under this fixed corruption distribution, which eliminates forward process drift and stabilizes optimization. 
For distributional consistency, STS is learned in the same instance normalized space as FGD.

\begin{theorem} PGD convergence for schedule optimization. \label{thm:proj_conv} \end{theorem}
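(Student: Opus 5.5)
The theorem is stated only by its title, so I first fix the setting I will prove it in. Optimize the schedule vector $\boldsymbol\beta=(\beta(1),\dots,\beta(T))$ directly; the MLP parameterization composed with the clamp is handled the same way by taking the feasible set in parameter space. The feasible set is the box $\mathcal{B}=[\varepsilon,1-\varepsilon]^T$, which is closed, convex and compact, and on which Proposition 1 guarantees valid $\bar\alpha_t\in(0,1)$. Stage One, with FGD frozen, runs
\[
\boldsymbol\beta^{k+1}=\Pi_{\mathcal{B}}\bigl(\boldsymbol\beta^k-\eta\nabla\mathcal{L}_{\mathrm{STS}}(\boldsymbol\beta^k)\bigr),
\]
where $\Pi_{\mathcal{B}}$ is coordinatewise clamping. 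The claim I will prove is as follows. If $\mathcal{L}_{\mathrm{STS}}$ is $L$-smooth on $\mathcal{B}$ and $\eta\le 1/L$, then $\mathcal{L}_{\mathrm{STS}}(\boldsymbol\beta^k)$ is monotonically nonincreasing, and the gradient mapping $G_\eta(\boldsymbol\beta)=\eta^{-1}(\boldsymbol\beta-\Pi_{\mathcal{B}}(\boldsymbol\beta-\eta\nabla\mathcal{L}_{\mathrm{STS}}))$ satisfies
\[
\min_{k<K}\|G_\eta(\boldsymbol\beta^k)\|^2\le \frac{2\bigl(\mathcal{L}_{\mathrm{STS}}(\boldsymbol\beta^0)-\mathcal{L}^\star\bigr)}{\eta K}.
\]
So every limit point is a stationary point of the constrained problem. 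If in addition the loss is convex, the standard $O(1/K)$ rate on function values follows.

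The first step is to justify $L$-smoothness on $\mathcal{B}$ term by term.
\begin{itemize}
\item $\mathcal{L}_{\mathrm{bar}}$ has Hessian entries $1/\beta^2\le\varepsilon^{-2}$, so it is smooth because of the clamp.
\item $\mathcal{L}_{\mathrm{init}}$ and $\mathcal{L}_{\mathrm{smooth}}$ are quadratics. Their curvature is bounded by $2$ and by $2\|D\|^2\le 8$ respectively, where $D$ is the difference operator.
\item $\mathcal{L}_{\mathrm{end}}$, $\mathcal{L}_{\mathrm{prog}}$ and $\mathcal{L}_{\mathrm{obj}}$ depend on $\boldsymbol\beta$ only through $\sqrt{\bar\alpha_t}$ and $\sqrt{1-\bar\alpha_t}$. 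These are products of $(1-\beta_s)$, which lie in $[\varepsilon,1-\varepsilon]$.
\end{itemize}
The map $\boldsymbol\beta\mapsto\sqrt{1-\bar\alpha_t}$ is singular at $\bar\alpha_t=1$. However, $\bar\alpha_t\le(1-\varepsilon)^t<1$, so it is $C^2$ with bounded derivatives on $\mathcal{B}$. Composing with the frozen FGD (assumed $C^2$ with bounded derivatives), the FFT (linear), and the KL and SF statistics gives bounded second derivatives. This uses the $\epsilon$-stabilized normalization, i.e.\ spectral mass bounded below, and that data have bounded second moments, after instance normalization. Then $L=\sum_i\lambda_iL_i$.

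The second step is the standard projected-gradient argument. The variational characterization of $\Pi_{\mathcal{B}}$ gives $\langle\nabla\mathcal{L}(\boldsymbol\beta^k),G_\eta\rangle\ge\|G_\eta\|^2$. Combining this with the descent lemma yields $\mathcal{L}(\boldsymbol\beta^{k+1})\le\mathcal{L}(\boldsymbol\beta^k)-\tfrac{\eta}{2}\|G_\eta(\boldsymbol\beta^k)\|^2$. Telescoping, and using $\mathcal{L}^\star>-\infty$, gives the bound. The lower bound holds because each term is nonnegative except $\mathcal{L}_{\mathrm{bar}}$, which is $\ge0$ since $\beta<1$. Compactness of $\mathcal{B}$ and continuity of $G_\eta$ give stationarity of limit points. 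For the MLP parameterization, the same argument applies with the chain rule, assuming bounded network Jacobians.

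The main obstacle is the smoothness constant for the spectral terms, in particular SF, which involves a geometric mean of spectral power, and the KL term. These blow up when power at some bin approaches zero. My first approach is to rely on the $\epsilon$ added inside the logs and denominators to bound the derivatives uniformly. If that fails, I will state $L$-smoothness as an explicit assumption. A secondary point is that the losses are expectations over noise, so the practical method is stochastic. I would state the deterministic result and remark that with unbiased gradients of variance $\sigma^2$ the bound acquires an additive $O(\eta\sigma^2)$ term.
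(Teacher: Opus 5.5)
This is correct and essentially the paper's proof. Your inequality $\langle\nabla\mathcal{L}(\boldsymbol\beta^k),G_\eta(\boldsymbol\beta^k)\rangle\ge\|G_\eta(\boldsymbol\beta^k)\|^2$ is the paper's projection variational inequality tested at $\boldsymbol u=\boldsymbol\beta^k$ (there written $\langle\nabla R(\boldsymbol\beta^k),\Delta^k\rangle\le-\|\Delta^k\|^2/\eta$), followed by the same descent-lemma step with $\eta\le 1/L$ and the same telescoping; note that it is the summability of $\|G_\eta(\boldsymbol\beta^k)\|^2$, not the $\min_{k<K}$ bound alone, that gives $\|G_\eta(\boldsymbol\beta^k)\|\to0$ and hence stationarity of every limit point. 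The paper simply assumes $R$ is $L$-smooth near the box rather than deriving it, so your fallback of stating it as a hypothesis is the right choice: as written, $\mathcal{L}_{\mathrm{end}}$ has no $\epsilon$-stabilization and the FGD path into $\mathcal{L}_{\mathrm{obj}}$ contains clip and complex-modulus operations, so your term-by-term smoothness verification does not literally go through.
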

\vspace{-0.2in}
Let $\mathcal{B}=[\beta_{\min},\beta_{\max}]^{T}$ and assume $R$ is differentiable and $L$-smooth on a neighborhood of $\mathcal{B}$.
Consider the update
\[
\boldsymbol\beta^{k+1}=\mathrm{Proj}_{\mathcal{B}}\!\left(\boldsymbol\beta^{k}-\eta\nabla R(\boldsymbol\beta^{k})\right),
\qquad 0<\eta\le \tfrac{1}{L},
\]
where $\mathrm{Proj}_{\mathcal{B}}(\cdot)=\mathrm{clamp}(\cdot,\beta_{\min},\beta_{\max})$.
Define the projected gradient mapping
\[
G_{\eta}(\boldsymbol\beta)=\tfrac{1}{\eta}\!\left(\boldsymbol\beta-\mathrm{Proj}_{\mathcal{B}}(\boldsymbol\beta-\eta\nabla R(\boldsymbol\beta))\right).
\]
Then the objective decreases monotonically and
\[
R(\boldsymbol\beta^{k+1})\le R(\boldsymbol\beta^{k})-\tfrac{\eta}{2}\|G_{\eta}(\boldsymbol\beta^{k})\|^{2},
\qquad 
\|G_{\eta}(\boldsymbol\beta^{k})\|\to 0.
\]
In particular, any limit point $\boldsymbol\beta^\star$ satisfies the projected first-order stationarity condition on $\mathcal{B}$.

\begin{theorem} Stable forward drift under schedule updates.  \label{thm:stability} \end{theorem}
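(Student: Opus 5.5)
Since the displayed statement of Theorem~\ref{thm:stability} is only its title here, I read it as follows. Let $\boldsymbol\beta,\boldsymbol\beta'\in\mathcal{B}=[\beta_{\min},\beta_{\max}]^T$ be two schedules, for instance consecutive PGD iterates from Theorem~\ref{thm:proj_conv}. Then the forward marginals $q_t^{\boldsymbol\beta}(\mathbf{x}_t\mid\mathbf{x}_0)=\mathcal{N}(\sqrt{\bar\alpha_t}\mathbf{x}_0,(1-\bar\alpha_t)\mathbf{I})$ move by at most a constant times $\|\boldsymbol\beta-\boldsymbol\beta'\|$, uniformly in $t$. Consequently the drift between successive schedule updates vanishes.

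\textbf{Step 1: Lipschitz control of $\bar\alpha_t$.} Since every $\alpha_s=1-\beta_s\in(0,1)$ (Proposition~1), I would use the telescoping identity
\[
\prod_{s\le t} a_s-\prod_{s\le t} b_s=\sum_{s\le t}\Big(\prod_{r<s}b_r\Big)(a_s-b_s)\Big(\prod_{r>s}a_r\Big).
\]
Every factor in the products is at most $1$, so this gives
\[
|\bar\alpha_t-\bar\alpha'_t|\le\sum_{s\le t}|\beta_s-\beta'_s|\le\sqrt{T}\,\|\boldsymbol\beta-\boldsymbol\beta'\|.
\]

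\textbf{Step 2: pass to the Gaussian parameters.} I need uniform bounds that keep both square-root maps Lipschitz.
\begin{itemize}
\item On $\mathcal{B}$ we have $\bar\alpha_t\ge \underline{a}:=(1-\beta_{\max})^T>0$. Hence $|\sqrt{\bar\alpha_t}-\sqrt{\bar\alpha'_t}|\le |\bar\alpha_t-\bar\alpha'_t|/(2\sqrt{\underline a})$.
\item We also have $1-\bar\alpha_t\ge 1-\bar\alpha_1\ge\beta_{\min}>0$. Hence $|\sqrt{1-\bar\alpha_t}-\sqrt{1-\bar\alpha'_t}|\le|\bar\alpha_t-\bar\alpha'_t|/(2\sqrt{\beta_{\min}})$.
\end{itemize}
Obtaining these two lower bounds from the box constraint is the step I expect to be the main obstacle. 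It is exactly where the projection onto $\mathcal{B}$, together with the barrier $\mathcal{L}_{\mathrm{bar}}$ and the clamp by $\varepsilon$, is needed. The resulting constants degrade as $(1-\beta_{\max})^{-T/2}$, so the statement must be read with $T$ fixed.

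\textbf{Step 3: distance between forward marginals.} Both marginals are isotropic Gaussians, so I would use the closed-form $W_2$ distance:
\[
W_2\big(q_t^{\boldsymbol\beta},q_t^{\boldsymbol\beta'}\big)\le |\sqrt{\bar\alpha_t}-\sqrt{\bar\alpha'_t}|\,\|\mathbf{x}_0\|+\sqrt{Hd}\,|\sqrt{1-\bar\alpha_t}-\sqrt{1-\bar\alpha'_t}|.
\]
Combining this with Steps 1 and 2 yields
\[
W_2\big(q_t^{\boldsymbol\beta},q_t^{\boldsymbol\beta'}\big)\le C\big(\|\mathbf{x}_0\|+\sqrt{Hd}\big)\|\boldsymbol\beta-\boldsymbol\beta'\|,
\]
with $C$ depending only on $T,\beta_{\min},\beta_{\max}$. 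Instance normalization keeps $\|\mathbf{x}_0\|$ bounded in expectation. If the theorem is instead stated in KL, the same parameter bounds plug into the Gaussian KL formula. The variance ratio is bounded away from $0$ and $\infty$ by $\beta_{\min}$, which gives a bound of order $\|\boldsymbol\beta-\boldsymbol\beta'\|^2$.

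\textbf{Step 4: apply to the PGD iterates.} By definition $\boldsymbol\beta^{k+1}-\boldsymbol\beta^k=-\eta G_\eta(\boldsymbol\beta^k)$. Theorem~\ref{thm:proj_conv} gives $\sum_k\|G_\eta(\boldsymbol\beta^k)\|^2\le 2(R(\boldsymbol\beta^0)-\inf R)/\eta$. Therefore the per-update drift is bounded by $C'\eta\|G_\eta(\boldsymbol\beta^k)\|$, its squares are summable, and it tends to $0$. This justifies freezing STS in Stage Two: once STS is frozen, the corruption distribution seen by FGD differs from the one it was trained under by an amount that vanishes.
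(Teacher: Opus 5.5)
Your argument is correct. It reaches the paper's conclusion by a somewhat different route, and it also bounds a different distance ($W_2$ rather than KL).

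\textbf{What the paper does.} It proves a fixed-$t$ bound $D_{\mathrm{KL}}(q_{\boldsymbol\beta'}\|q_{\boldsymbol\beta})\le C\|\boldsymbol\beta'-\boldsymbol\beta\|_\infty^2$ under the \emph{extra hypothesis} $\bar\alpha_t,\bar\alpha'_t\in[a,1-a]$. The ingredients are:
\begin{itemize}
\item the closed-form Gaussian KL, with a second-order Taylor bound on $\phi(r)=r-1-\log r$;
\item the mean value theorem for $\sqrt{\cdot}$;
\item control of $|\bar\alpha_t-\bar\alpha'_t|$ via $\log\bar\alpha_t=\sum_s\log(1-\beta_s)$ and the mean value theorem for $\log(1-u)$ and $\exp$, which costs a factor $t/(1-\beta_{\max})$.
\end{itemize}

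\textbf{Where you differ.}

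\emph{Lipschitz step.} Your telescoping identity gives $|\bar\alpha_t-\bar\alpha'_t|\le t\|\boldsymbol\beta-\boldsymbol\beta'\|_\infty$ with no $1/(1-\beta_{\max})$ loss. Substituted into the paper's argument, this yields the constant $t^2\bigl[D(1-a)^2/(4a^4)+\|\mathbf{x}_0\|_2^2/(8a^2)\bigr]$. That constant genuinely depends only on $a,t,\mathbf{x}_0$ (and $D$), as the statement claims. The paper's own constant carries an extra $(1-\beta_{\max})^{-2}$.

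\emph{Bounds on $\bar\alpha_t$.} You derive $(1-\beta_{\max})^t\le\bar\alpha_t\le 1-\beta_{\min}$ directly from the box. So the paper's hypothesis is automatic for any $a<\tfrac12$ with $a\le\min\{(1-\beta_{\max})^t,\beta_{\min}\}$. This step is one line, not the ``main obstacle'' you anticipate. The barrier and the clamp play no role once $\boldsymbol\beta\in\mathcal{B}$ is given. The price is that the box-derived $a$ can be astronomically small, e.g.\ when $\beta_{\max}=1-\varepsilon$. The paper's assumed $a$, by contrast, can reflect the schedule actually learned.

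\emph{KL part.} Your KL paragraph is only a sketch, and the KL bound is what the paper actually proves. Write out:
\begin{itemize}
\item $\phi(r)\le (r-1)^2/(2\min\{1,r\}^2)$ with $r=v'/v\ge\beta_{\min}$;
\item $|r-1|\le|\bar\alpha_t-\bar\alpha'_t|/\beta_{\min}$;
\item the mean term divided by $v\ge\beta_{\min}$.
\end{itemize}
This is exactly the paper's computation, with $a$ replaced by your box constants.

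\textbf{Step 4.} Your summable-drift argument along the PGD iterates, via Theorem~\ref{thm:proj_conv}, goes beyond what the paper proves. It is correct, and it makes precise the informal remark that follows the two theorems.
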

\vspace{-0.2in}
Fix $t\in\{1,\ldots,T\}$ and consider schedules
$\boldsymbol\beta,\boldsymbol\beta'\in[\beta_{\min},\beta_{\max}]^{T}\subset(0,1)^{T}$.
Let the forward process be
\begin{equation}\label{eq:forward_cond_beta}
q_{\boldsymbol\beta}(\mathbf{x}_t\mid\mathbf{x}_0)
=
\mathcal{N}\!\left(\sqrt{\bar{\alpha}_t}\,\mathbf{x}_0,\ (1-\bar{\alpha}_t)\mathbf{I}\right),
\end{equation}
\begin{equation}\label{eq:forward_cond_betap}
q_{\boldsymbol\beta'}(\mathbf{x}_t\mid\mathbf{x}_0)
=
\mathcal{N}\!\left(\sqrt{\bar{\alpha}'_t}\,\mathbf{x}_0,\ (1-\bar{\alpha}'_t)\mathbf{I}\right).
\end{equation}
Assume $\bar{\alpha}_t,\bar{\alpha}'_t\in[a,1-a]$ for $a\in(0,\tfrac12)$.
Then for any $\mathbf{x}_0$,
\begin{equation}\label{eq:kl_drift_bound}
D_{\mathrm{KL}}\!\left(q_{\boldsymbol\beta'}(\mathbf{x}_t\!\mid\!\mathbf{x}_0)\ \|\ q_{\boldsymbol\beta}(\mathbf{x}_t\!\mid\!\mathbf{x}_0)\right)
\le
C(a,t,\mathbf{x}_0)\,\|\boldsymbol\beta'-\boldsymbol\beta\|_{\infty}^{2},
\end{equation}
where $C(a,t,\mathbf{x}_0)$ is a constant independent of $\boldsymbol\beta,\boldsymbol\beta'$.

Theorem~\ref{thm:proj_conv} establishes the convergence of the projected schedule optimization under box constraints, ensuring that the updates monotonically decrease the objective toward a projected first-order stationary point. Theorem~\ref{thm:stability} demonstrates that bounded schedule perturbations result in controlled variations in the forward process. Detailed derivations and formal proofs are provided in Appendix~\ref{Theorem Evidence}.

\section{Experiments}

\subsection{Experimental Settings}

\paragraph{Datasets.} We evaluate our method on eight widely used real world multivariate time-series benchmarks with heterogeneous characteristics, including Electricity (ECL), ILI, ETT{h1, ETTh2, ETTm1, ETTm2}, Traffic and SolarEnergy. Table~\ref{tab:dataset_stats} reports the basic statistics. For data splitting, we follow standard protocols in time series forecasting. Specifically, ETT datasets are partitioned by 12/4/4 months for train/validation/test, while the remaining datasets adopt a 7:1:2 split~\cite{TMDM, patchtst}. Additional details can be found in Appendix~\ref{dataset}.

\begin{table}[thb]
\vspace{-0.1in}
\centering
\caption{Properties across different datasets.}
\label{tab:dataset_stats}
\setlength{\tabcolsep}{3pt}
\renewcommand{\arraystretch}{1.1}
\scriptsize
\resizebox{\columnwidth}{!}{%
\begin{tabular}{lcccccccc}
\hline
\textbf{Property} & \textbf{ETTh1} & \textbf{ETTh2} & \textbf{ETTm1} & \textbf{ETTm2} & \textbf{ECL} & \textbf{ILI} & \textbf{SolarEnergy} & \textbf{Traffic} \\
\hline
\textbf{Dimension} & 7 & 7 & 7 & 7 & 321 & 7 & 137 & 862 \\
\textbf{Steps}     & 17420 & 17420 & 69680 & 69680 & 26304 & 966 & 52560 & 17544 \\
\textbf{H}         & 192 & 192 & 192 & 192 & 192 & 36 & 192 & 192 \\
\hline
\end{tabular}%
}
\vspace{-0.2in}
\end{table}

\begin{table*}[t]
  \centering
  \caption{Probabilistic forecasting performance on long term multivariate datasets.}
  \vspace{-0.1in}
  \label{tab:prob_results_main}
  \setlength{\tabcolsep}{4pt}
  \renewcommand{\arraystretch}{1.15}
  \small
  \begin{tabular}{llcccccccc}
    \toprule
    Method & Metric & ETTh1 & ETTh2 & ETTm1 & ETTm2 & Electricity & ILI & SolarEnergy & Traffic \\
    \midrule
    \multirow{3}{*}{CSDI}
      & CRPS & 0.5240 & 0.8494 & 0.5144 & 0.8724 & 0.5715 & 1.3959 & 0.4374 & 0.6126 \\
      & MAE  & 0.7294 & 1.1557 & 0.7156 & 1.1352 & 0.8013 & 1.6311 & 0.6875 & 0.7744 \\
      & MSE  & 0.9779 & 2.2883 & 0.9429 & 2.2511 & 0.9662 & 5.5810 & 0.7371 & 1.3859 \\
    \addlinespace[2pt]
    \multirow{3}{*}{D3VAE}
      & CRPS & 0.5601 & 1.1163 & 0.4771 & 0.7281 & 0.4086 & 1.0506 & 0.4118 & 0.4394 \\
      & MAE  & 0.7440 & 1.5193 & 0.5690 & 1.0188 & 0.4539 & 1.3594 & 0.4257 & 0.4304 \\
      & MSE  & 0.9635 & 3.4012 & 0.6385 & 1.6347 & 0.4083 & 4.1941 & 0.3901 & 0.8116 \\
    \addlinespace[2pt]
    \multirow{3}{*}{TimeDiff}
      & CRPS & 0.4683 & 0.4816 & 0.4646 & 0.3129 & 0.7522 & 0.9915 & 0.7105 & 0.7717 \\
      & MAE  & \underline{0.4824} & 0.4952 & 0.4781 & \underline{0.3266} & 0.7660 & \underline{1.0486} & 0.7242 & 0.7854 \\
      & MSE  & \underline{0.5170} & 0.4723 & \underline{0.5319} & 0.2647 & 0.8701 & \underline{3.3182} & 0.8104 & 1.3440 \\
    \addlinespace[2pt]
    \multirow{3}{*}{DiffusionTS}
      & CRPS & 0.6024 & 1.1732 & 0.5558 & 1.0135 & 0.4556 & 1.5928 & 0.5074 & 0.4968 \\
      & MAE  & 0.7738 & 1.4185 & 0.7247 & 1.2113 & 0.5303 & 1.7771 & 0.7505 & 0.6126 \\
      & MSE  & 1.0889 & 3.2864 & 1.0052 & 2.3092 & 0.5009 & 6.0448 & 0.7635 & 1.0797 \\
    \addlinespace[2pt]
    \multirow{3}{*}{NsDiff}
      & CRPS & \underline{0.4021} & \underline{0.3537} & \underline{0.3643} & \underline{0.2701} & \underline{0.2848} & \underline{0.9541} & \underline{0.2474} & \underline{0.3710} \\
      & MAE  & 0.5382 & \underline{0.4779} & \underline{0.4713} & 0.3425 & \underline{0.3048} & 1.1592 & \underline{0.3053} & \underline{0.3950} \\
      & MSE  & 0.6186 & \underline{0.4644} & 0.5727 & \underline{0.2597} & \underline{0.2127} & 3.6098 & \textbf{0.2469} & \underline{0.6657} \\
    \midrule
    \multirow{4}{*}{StaTS}
      & CRPS & \textbf{0.3491} & \textbf{0.3148} & \textbf{0.3008} & \textbf{0.2389} & \textbf{0.2474} & \textbf{0.8293} & \textbf{0.2210} & \textbf{0.3091}  \\
      & \cellcolor{lightgray!30}\scriptsize Impr.(\%) & \cellcolor{lightgray!30}\scriptsize 13.18 & \cellcolor{lightgray!30}\scriptsize 11.00 & \cellcolor{lightgray!30}\scriptsize 17.43 & \cellcolor{lightgray!30}\scriptsize 11.55 & \cellcolor{lightgray!30}\scriptsize 13.13 & \cellcolor{lightgray!30}\scriptsize 13.08 & \cellcolor{lightgray!30}\scriptsize 10.67 & \cellcolor{lightgray!30}\scriptsize 16.68 \\
      & MAE  & \textbf{0.4584} & \textbf{0.4170} & \textbf{0.3946} & \textbf{0.3109} & \textbf{0.3003} & \textbf{0.9485} & \textbf{0.2721} & \textbf{0.3664} \\
      & MSE  & \textbf{0.4921} & \textbf{0.3983} & \textbf{0.3841} & \textbf{0.2529} & \textbf{0.2123} & \textbf{2.3996} & \underline{0.2663} & \textbf{0.5368} \\
    \bottomrule
  \end{tabular}
  \vspace{-0.2in}
\end{table*}

\paragraph{Baselines.} We compare against five state-of-the-art probabilistic time series forecasting baselines, including CSDI~\cite{csdi}, D3VAE~\cite{d3vae}, TimeDiff~\cite{timediff}, DiffusionTS~\cite{diffusionts}, and NsDiff~\cite{nsdiff}.

\paragraph{Experimental settings.}
All experiments follow the standard protocol for long term multivariate time series forecasting, with a fixed input length of $168$. Hyperparameters are selected based on the best validation performance, and the final results are reported on the test set. By default, we use a learning rate of $10^{-3}$, a batch size of $32$, and a diffusion horizon of $T=50$ steps~\cite{csdi, diffusionts}. For our method, we initialize the schedule with $\beta_1=10^{-5}$ and $\beta_T=0.1$ and learn an adaptive variance schedule during training. We train each model for $50$ epochs using three random seeds ${1,2,3}$. At inference time, we draw $100$ samples per test instance to estimate the predictive distribution. All baseline methods are evaluated using their default configurations~\cite{nsdiff}. More details of experimental settings are provided in Appendix~\ref{setting}.

\paragraph{Evaluation metrics.}
We report mean squared error (MSE) and mean absolute error (MAE) to evaluate forecasting accuracy, and use the continuous ranked probability score (CRPS)~\cite{crps} to assess predictive uncertainty. Lower values indicate better performance. The detailed formulation is provided in Appendix~\ref{metrics}.


\subsection{Main Results}
Table~\ref{tab:prob_results_main} summarizes the probabilistic forecasting performance across eight multivariate benchmarks. StaTS consistently outperforms all baselines, achieving the best CRPS and MAE on every dataset and obtaining the lowest MSE in most cases. Compared to the best performing baseline, StaTS yields substantial and consistent CRPS improvements, underscoring its stronger ability to characterize predictive distributions. These gains suggest that a carefully optimized, adaptive corruption trajectory enables the denoiser to better preserve and leverage essential structural information, thereby enhancing overall distributional modeling. Furthermore, the concurrent MAE improvements indicate that stronger representations also translate into higher point forecasting accuracy. While StaTS reduces MSE on most benchmarks, it slightly underperforms \textit{NsDiff} on SolarEnergy. This discrepancy likely stems from the inherent characteristics of SolarEnergy, where the prevalence of near-zero values and relatively smooth dynamics make the MSE metric exceptionally sensitive to minor amplitude deviations. The consistent CRPS gains indicate that StaTS improves both probabilistic characterization and forecasting accuracy.

\paragraph{Instance visualization.} Fig.~\ref{CRPS_com1} and Fig.~\ref{CRPS_com2} qualitatively compare NsDiff and StaTS on ETTh1 and Electricity, visualizing the predictive mean and uncertainty bands under the same history window. On ETTh1, NsDiff produces an excessively smoothed mean that fails to capture the periodic oscillations, and its uncertainty band expands almost uniformly with limited sensitivity to local structural changes. In contrast, StaTS tracks the oscillatory dynamics more faithfully in both phase and amplitude, and its uncertainty band contracts and widens in accordance with the evolving structure. On the more regular Electricity dataset, StaTS exhibits enhanced precision and adaptability. Specifically, it produces narrower intervals during stable periods while extending coverage to encompass local deviations. Further visual comparisons are provided in Appendix~\ref{visualization}.


\begin{figure}[t]
  \centering
  \includegraphics[width=0.5\textwidth]{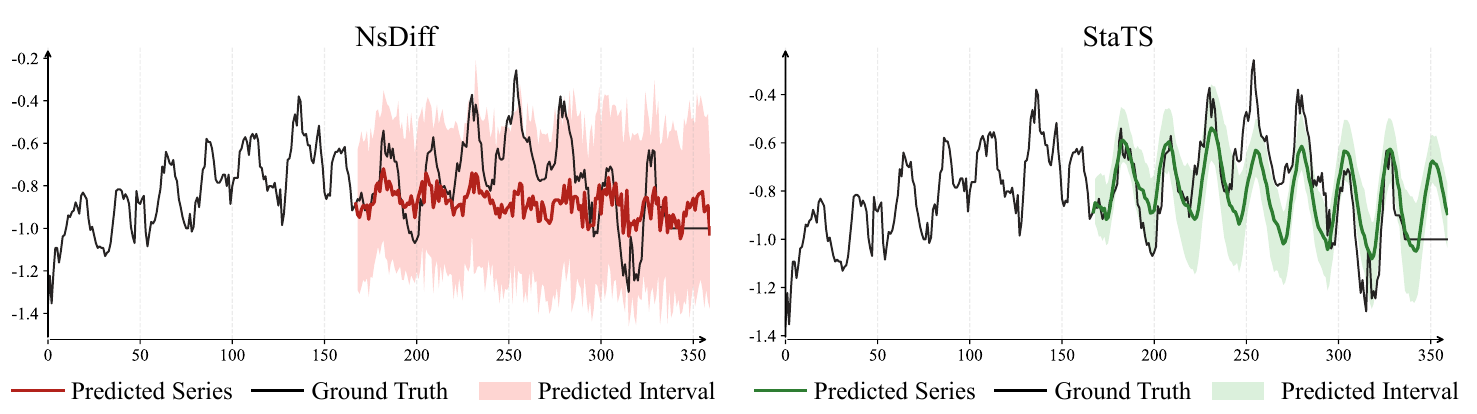}
     \caption{Visualize results on the ETTh1 dataset.}
     \vspace{-0.1in}
  \label{CRPS_com1}
\end{figure}

\begin{figure}[t]
  \centering
  \includegraphics[width=0.5\textwidth]{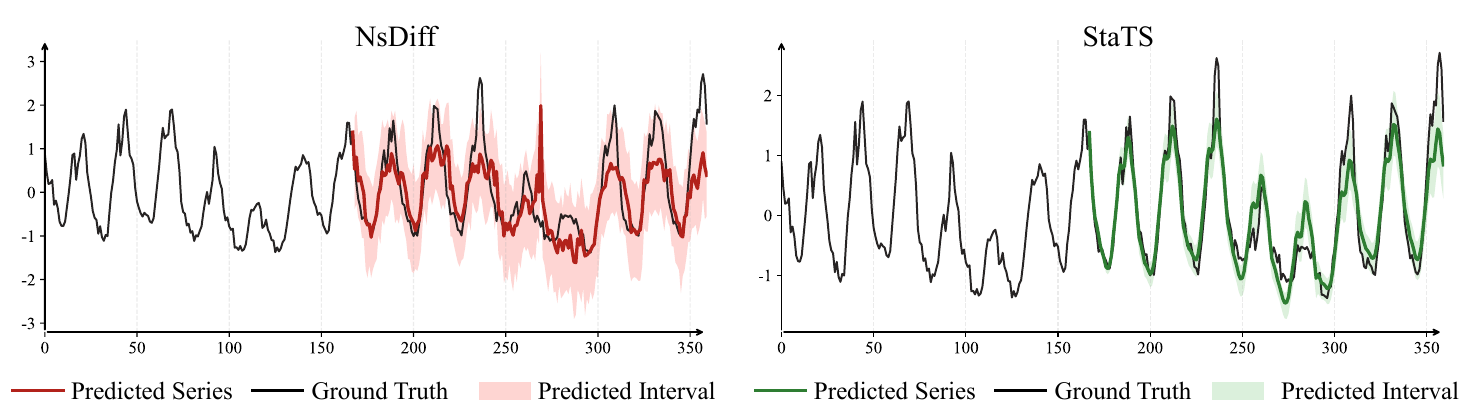}
     \caption{Visualize results on Electricity dataset.}
  \label{CRPS_com2}
  \vspace{-0.2in}
\end{figure}

\begin{table}[t]
  \centering
  \caption{CRPS under different diffusion steps $T$.}
    \vspace{-0.1in}
  \label{tab:steps_crps_compact}
  \footnotesize
  \setlength{\tabcolsep}{4pt}
  \renewcommand{\arraystretch}{1.12}

  \begin{tabular}{
    c
    S[table-format=1.4] S[table-format=1.4] c
    S[table-format=1.4] S[table-format=1.4] c
  }
    \toprule
    \multirow{2}{*}{$T$}
      & \multicolumn{3}{c}{ETTm1}
      & \multicolumn{3}{c}{Traffic} \\
    \cmidrule(lr){2-4}\cmidrule(lr){5-7}
      & {Linear} & {STS} & {\cellcolor{lightgray!30}Imp.(\%)}
      & {Linear} & {STS} & {\cellcolor{lightgray!30}Imp.(\%)} \\
    \midrule
    10  & 0.5031 & 0.3122 & \cellcolor{lightgray!30}37.9
        & 0.6738 & 0.3002 & \cellcolor{lightgray!30}55.4 \\
    20  & 0.4729 & 0.2951 & \cellcolor{lightgray!30}37.6
        & 0.4109 & 0.3131 & \cellcolor{lightgray!30}23.8 \\ 
    50  & 0.3102 & 0.3008 & \cellcolor{lightgray!30}3.0
        & 0.3154 & 0.3091 & \cellcolor{lightgray!30}2.0 \\
    100 & 0.3267 & 0.3047 & \cellcolor{lightgray!30}6.7
        & 0.3357 & 0.3333 & \cellcolor{lightgray!30}0.7 \\
    \bottomrule
  \end{tabular}
  \vspace{-0.2in}
\end{table}

\subsection{Effect of Diffusion Steps}

Table~\ref{tab:steps_crps_compact} investigates how the number of diffusion steps $T$ affects probabilistic forecasting performance under a linear schedule. Under a limited inference budget, STS yields the largest gains: with only a few reverse transitions, forecasting quality becomes highly sensitive to the denoisability of intermediate corrupted states. A fixed linear schedule can drive the forward process toward overly corrupted states and distort relevant spectral structure, making it difficult for the reverse process to recover informative details. In contrast, STS learns a corruption trajectory adapted to forecasting, thereby achieving markedly lower CRPS with small diffusion steps. As $T$ increases, the relative advantage of STS gradually diminishes and saturates. A longer reverse chain provides additional refinement opportunities, allowing the fixed schedule to progressively correct errors and better approximate the target distribution. Moreover, excessively large $T$ may introduce redundant transitions and accumulate estimation noise, further narrowing the gap across schedules. Further analyses can be found in Appendix~\ref{Cost sensitivity to diffusion steps}.

\begin{figure}[htbp]
  \centering
  \vspace{-0.1in}
  \includegraphics[width=\linewidth]{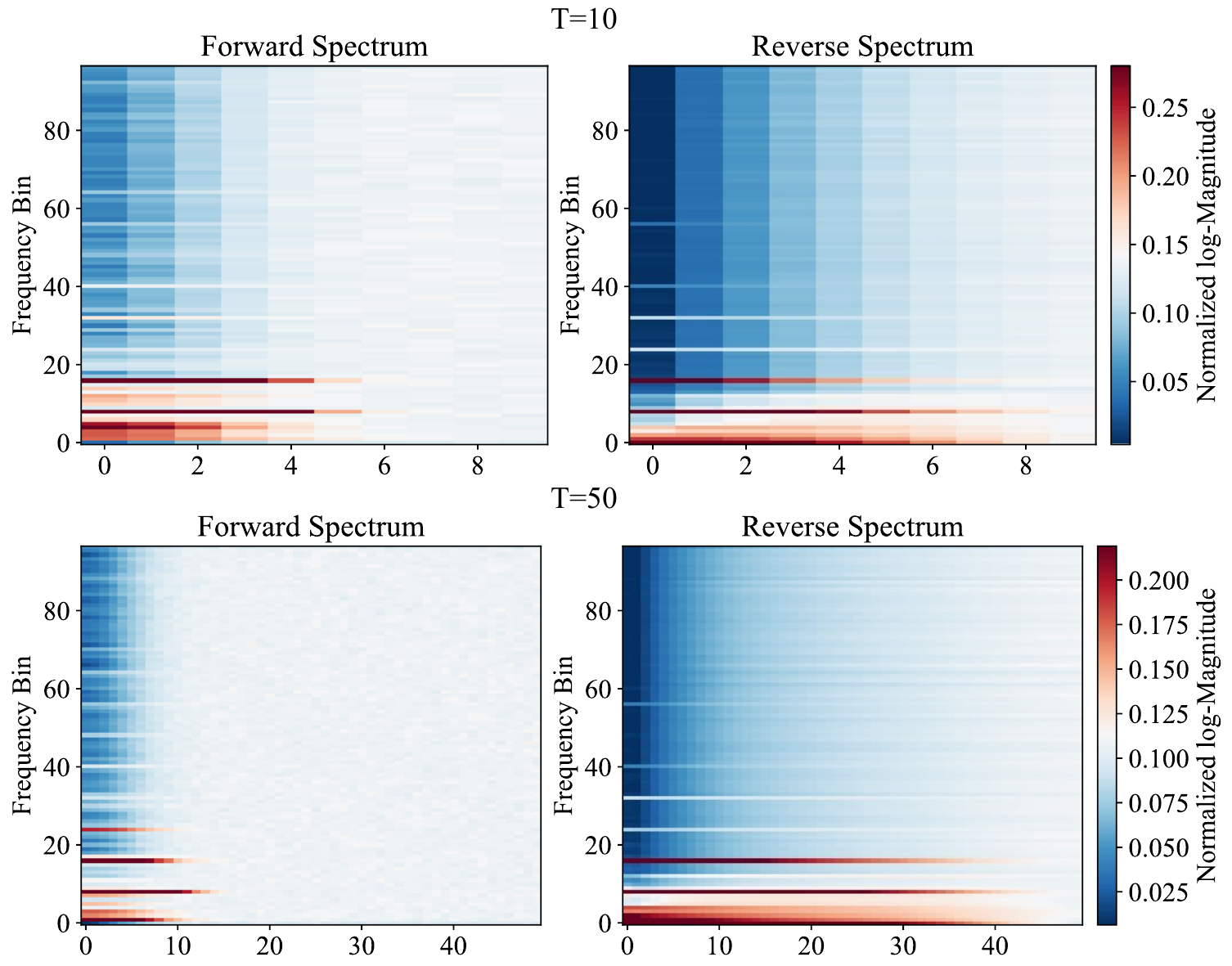}
  \caption{
  Normalized spectral evolution of the forward corruption and reverse denoising trajectories under different diffusion steps.
  }
  \label{fig:pingpu}
  \vspace{-0.2in}
\end{figure}

\paragraph{Controllability of spectral trajectories across diffusion steps.}
Fig.~\ref{fig:pingpu} visualizes the evolution of spectral energy along the forward corruption and reverse denoising processes under different numbers of diffusion steps $T$. The role of $T$ is not to decide whether recovery is possible, but to reshape how spectral structure is distributed and reconstructed along the chain. Under a small inference budget such as $T{=}10$, the forward process concentrates spectral compression into a few transitions, while the reverse chain has limited opportunities to refine and must recover structure from fewer intermediate states. As a result, performance becomes more sensitive to the denoisability of these states. When $T{=}50$, both spectral attenuation and subsequent recovery are distributed across more steps, producing a more gradual and continuous reconstruction. Crucially, in both regimes, the learned adaptive corruption trajectory improves the consistency between schedule driven spectral decay and reverse reconstruction dynamics, yielding more controllable spectral evolution and more stable conditional generation with better calibrated uncertainty.

\begin{figure}[t]
  \centering
  \includegraphics[width=0.5\textwidth]{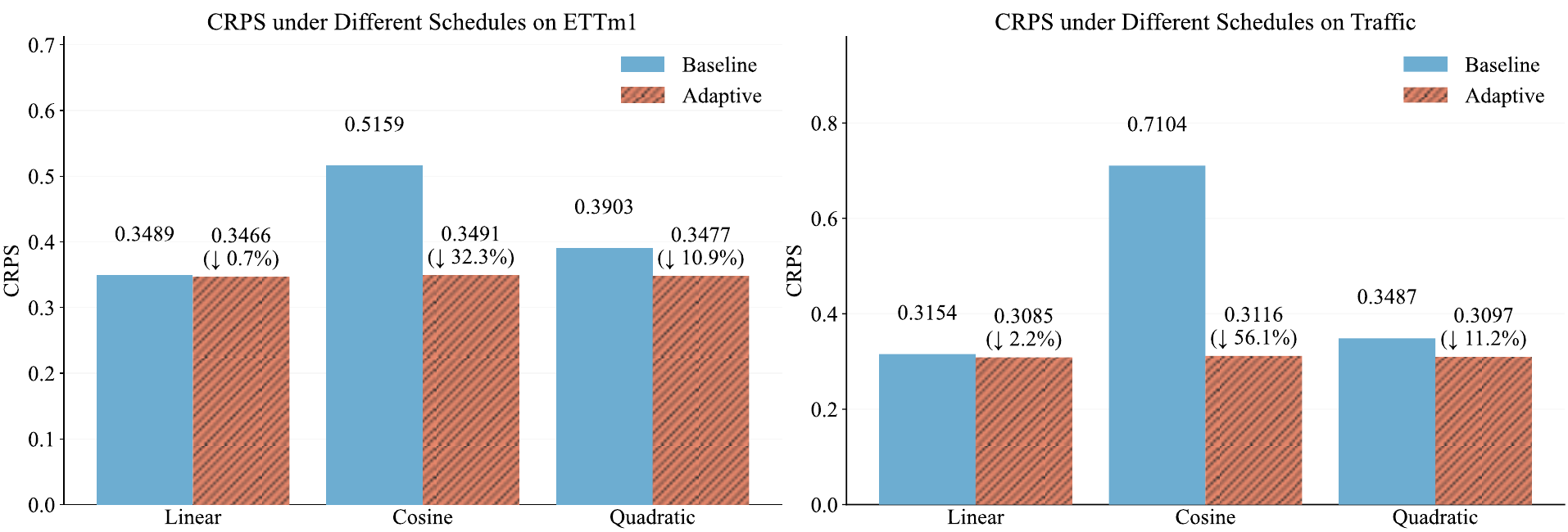}
     \caption{CRPS Performance under different schedules.}
  \label{CRPS_com}
  \vspace{-0.25in}
\end{figure}

\subsection{Effect of Diffusion Schedules}

Fig.~\ref{CRPS_com} compares forecasting performance under three schedule templates, Linear, Cosine, and Quadratic, with $T{=}50$. The fixed schedule exhibits clear sensitivity to the initialization. Different functional forms impose different corruption rates over diffusion time, which reshapes the distribution of intermediate noisy states and hence their denoisability. On strongly periodic series such as Traffic, an unsuitable initialization can create a mismatch between the forward corruption profile and the reverse denoiser correction capacity, leading to degraded probabilistic forecasts. After STS optimization, this dependence is substantially mitigated. The STS refined schedules consistently improve upon their corresponding schedules across all initializations, suggesting that STS learns an adaptive corruption trajectory that yields intermediate states that are easier to denoise. As a result, forecasting quality improves while the reliance on manual schedule tuning is reduced.

\begin{figure*}[t]
  \centering
  \includegraphics[width=\linewidth]{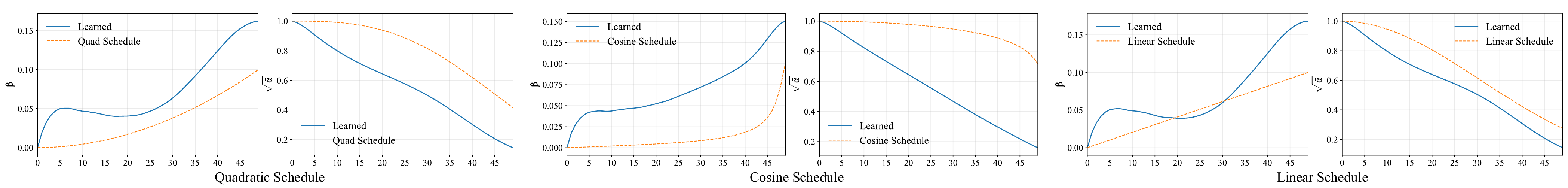}
  \vspace{-0.1in}
  \caption{
  Forward trajectories under different initialized schedules.
  }
  \label{fig:schedule_traj}
  \vspace{-0.1in}
\end{figure*}

\paragraph{Learned corruption trajectories across different schedule initializations.} Fig.~\ref{fig:schedule_traj} indicates that STS converges to a consistent schedule pattern regardless of the initialization on ETTh1 dataset. The learned $\beta_t$ departs from standard monotonic templates and takes a clearly nonlinear form, increasing sharply early on, flattening through the middle, and steepening again near the end. The corresponding $\bar{\alpha}_t$ therefore traces a noise accumulation trajectory that better reflects the statistics. This behavior suggests that STS learns a dataset-specific adaptive corruption trajectory. By tempering the noise injection rate in the intermediate phase, STS avoids compressing critical information into a small number of transitions, producing intermediate states.

\begin{table}[t]
  \centering
  \caption{Ablation study on forecasting performance.}
  \label{tab:ablation}
  \setlength{\tabcolsep}{6pt}
  \renewcommand{\arraystretch}{1.12}
  \small
  \begin{tabular}{l l c c c}
    \toprule
    Method & Metric & ETTm1 & Electricity & Traffic \\
    \midrule
    \multirow{3}{*}{w/o EPO}
      & CRPS & 0.5045 & 0.6748 & 0.6790 \\
      & MAE  & 0.5509 & 0.7440 & 0.7752 \\
      & MSE  & 0.6974 & 0.8181 & 1.3107 \\
    \addlinespace[2pt]
    \multirow{3}{*}{w/o SDE}
      & CRPS & 0.3035 & 0.2488 & 0.3169 \\
      & MAE  & 0.3994 & 0.3126 & 0.3795 \\
      & MSE  & 0.3952 & 0.2226 & 0.5566 \\
    \addlinespace[2pt]
    \multirow{3}{*}{w/o STS}
      & CRPS & 0.3102 & 0.2517 & 0.3154 \\
      & MAE  & 0.3986 & 0.3026 & 0.3675 \\
      & MSE  & 0.3861 & 0.2128 & 0.5371 \\
    \multirow{3}{*}{w/o IN}
      & CRPS & 0.4455 & 0.2585 & 0.3404 \\
      & MAE  & 0.5399 & 0.3020 & 0.3879 \\
      & MSE  & 0.5427 & 0.2150 & 0.5828 \\
    \midrule
    \multirow{3}{*}{\textbf{StaTS}}
      & CRPS & \textbf{0.3008} & \textbf{0.2474} & \textbf{0.3091} \\
      & MAE  & \textbf{0.3946} & \textbf{0.3003} & \textbf{0.3664} \\
      & MSE  & \textbf{0.3841} & \textbf{0.2123} & \textbf{0.5368} \\
    \bottomrule
  \end{tabular}
  \vspace{-0.25in}
\end{table}

\subsection{Ablation Study.}
Table~\ref{tab:ablation} reports the ablation results of StaTS on three datasets to assess the contribution of each component.

\emph{w/o EPO.} We remove the endpoint objectives $\mathcal{L}_{\mathrm{init}}$ and $\mathcal{L}_{\mathrm{end}}$ when optimizing STS.

\emph{w/o SDE.} We remove Spectral Distortion Estimation from the spectral conditioned denoising module.

\emph{w/o STS.} We replace the Spectral Trajectory Scheduler with the fixed linear schedule instead.

\emph{w/o IN.} We remove both instance normalization and conditional normalization modules.

Removing the schedule endpoint objectives causes the largest degradation, especially on Electricity and Traffic datasets, indicating that a well posed initialization and a spectrally meaningful terminal constraint are critical for shaping a stable corruption trajectory. Otherwise, the learned schedule becomes less regular and produces intermediate states that are harder to denoise, which in turn makes uncertainty estimation more challenging. Removing SDE consistently worsens CRPS and increases error of the point forecasting, suggesting that the denoiser benefits from the schedule induced spectral damage to modulate corrections under heterogeneous corruption. Disabling STS also reduces performance across benchmarks, the fixed schedule yields a spectral evolution that is less aligned with dataset characteristics, leading to intermediate states that are less compatible with the reverse process and thus degrading both distributional quality and point accuracy. Moreover, removing IN leads to a clear performance drop, with the most pronounced degradation on ETTm1. IN leverages statistics from the history window to align the feature distributions of the history and target windows, thereby mitigating amplitude shifts. Without normalization, denoising becomes less stable and the predicted distribution is less well calibrated.

\begin{figure}[t]
  \centering
  \includegraphics[width=0.5\textwidth]{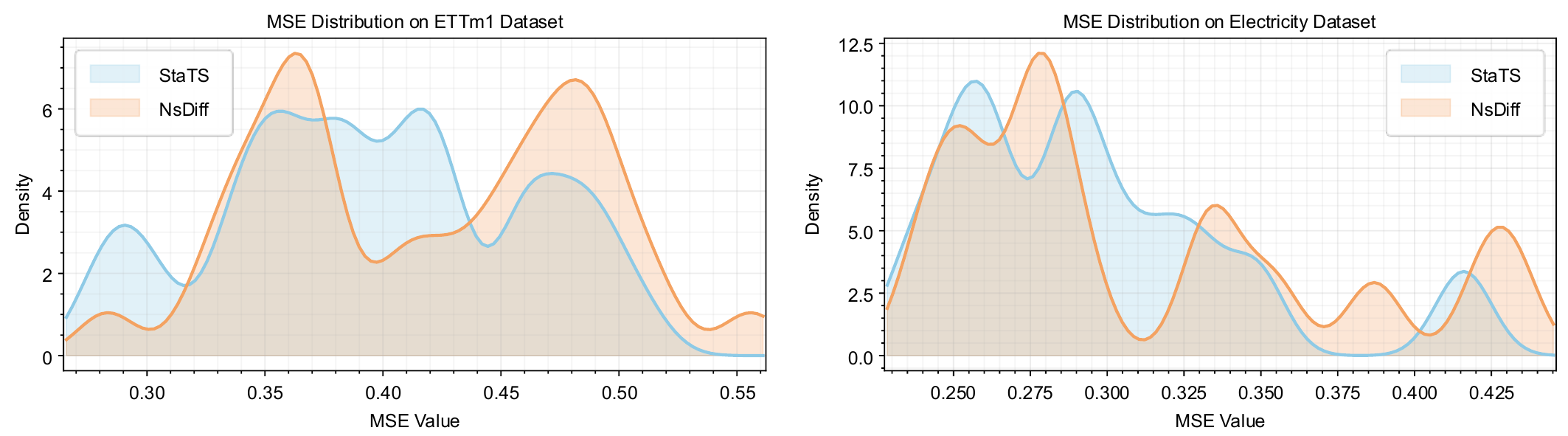}
     \caption{MSE distributions on the ETTm1 and Electricity datasets.}
  \label{fb}
  \vspace{-0.2in}
\end{figure}

\subsection{MSE distributions.} 
Fig.~\ref{fb} illustrates the MSE distributions across individual test instances on the ETTm1 and Electricity datasets~\cite{liu2025improving}. Compared to NsDiff, StaTS shows a clearer left shift and a higher degree of concentration on both datasets, indicating lower prediction errors for a larger portion of test cases. In contrast, NsDiff presents a more dispersed distribution with multiple modes and retains more probability mass in regions of higher error, suggesting reduced stability under heterogeneous samples. The lighter right tails and lower overall dispersion observed for StaTS indicate stronger reliability. This distributional pattern is consistent with the average results reported in Table~\ref{tab:prob_results_main} and helps explain the overall improvements in CRPS and MSE.

\section{Conclusion}
We propose StaTS, a diffusion forecasting framework that couples noise scheduling and denoising through a Spectral Trajectory Scheduler (STS) and a Frequency Guided Denoiser (FGD).
STS learns a data-adaptive schedule under spectral regularization, while FGD estimates schedule-induced spectral distortion to modulate denoising strength across diffusion steps and variables. Across long-horizon multivariate benchmarks, StaTS consistently outperforms prior methods, achieving concurrent improvements in probabilistic forecasting quality (CRPS) and point accuracy (MAE). Further analyses show that the spectrally regularized schedule yields intermediate states with stronger stepwise separability and improved invertibility, and that frequency guided denoising supports heterogeneous restoration while remaining effective with fewer inference steps. Future work will develop more efficient inference and step adaptive strategies while preserving forecasting quality.

\clearpage

\section*{Impact Statement}
This paper presents work whose goal is to advance the field of Machine Learning. There are many potential societal consequences of our work, none of which we feel must be specifically highlighted here.

\bibliography{example_paper}
\bibliographystyle{icml2026}

\clearpage

\appendix
\onecolumn

\section{Theorem Evidence}

\label{Theorem Evidence}

We provide a detailed proof of Theorem~3.1 below.

\begin{proof}
Let
\begin{equation}
\boldsymbol z^k = \boldsymbol\beta^{k}-\eta \nabla R(\boldsymbol\beta^{k}),\qquad
\boldsymbol\beta^{k+1}=\mathrm{Proj}_{\mathcal{B}}(\boldsymbol z^k),
\qquad
\Delta^k = \boldsymbol\beta^{k+1}-\boldsymbol\beta^{k}.
\end{equation}
We first use a standard optimality property of Euclidean projection onto a closed convex set.
Since $\mathcal{B}$ is closed and convex, $\boldsymbol\beta^{k+1}=\mathrm{Proj}_{\mathcal{B}}(\boldsymbol z^k)$ is the unique minimizer of
\begin{equation}
\min_{\boldsymbol u\in\mathcal{B}} \ \frac{1}{2}\|\boldsymbol u-\boldsymbol z^k\|^2.
\end{equation}
Therefore, the first-order optimality condition for this convex problem gives the variational inequality
\begin{equation}
\left\langle \boldsymbol\beta^{k+1}-\boldsymbol z^k,\ \boldsymbol u-\boldsymbol\beta^{k+1}\right\rangle \ge 0,
\qquad \forall\,\boldsymbol u\in\mathcal{B}.
\label{eq:proj_vi}
\end{equation}
Choosing $\boldsymbol u=\boldsymbol\beta^{k}\in\mathcal{B}$ in \eqref{eq:proj_vi} yields
\begin{align}
0 
&\le \left\langle \boldsymbol\beta^{k+1}-\boldsymbol z^k,\ \boldsymbol\beta^{k}-\boldsymbol\beta^{k+1}\right\rangle \nonumber\\
&= \left\langle \boldsymbol\beta^{k+1}-\boldsymbol\beta^{k}+\eta \nabla R(\boldsymbol\beta^{k}),\ \boldsymbol\beta^{k}-\boldsymbol\beta^{k+1}\right\rangle \nonumber\\
&= -\|\Delta^k\|^2 + \eta \left\langle \nabla R(\boldsymbol\beta^{k}),\ \boldsymbol\beta^{k}-\boldsymbol\beta^{k+1}\right\rangle.
\end{align}
Rearranging gives
\begin{equation}
\left\langle \nabla R(\boldsymbol\beta^{k}),\ \boldsymbol\beta^{k+1}-\boldsymbol\beta^{k}\right\rangle
\le
-\frac{1}{\eta}\|\Delta^k\|^2.
\label{eq:inner_bd}
\end{equation}

Next, use $L$-smoothness of $R$.
For any $\boldsymbol x,\boldsymbol y$ in the neighborhood where $R$ is $L$-smooth, the descent lemma holds:
\begin{equation}
R(\boldsymbol y)\le R(\boldsymbol x)+\langle \nabla R(\boldsymbol x),\boldsymbol y-\boldsymbol x\rangle+\frac{L}{2}\|\boldsymbol y-\boldsymbol x\|^2.
\label{eq:descent_lemma}
\end{equation}
Apply \eqref{eq:descent_lemma} with $\boldsymbol x=\boldsymbol\beta^{k}$ and $\boldsymbol y=\boldsymbol\beta^{k+1}$:
\begin{equation}
R(\boldsymbol\beta^{k+1})
\le
R(\boldsymbol\beta^{k})
+\left\langle \nabla R(\boldsymbol\beta^{k}),\Delta^k\right\rangle
+\frac{L}{2}\|\Delta^k\|^2.
\label{eq:rl_smooth_step}
\end{equation}
Combining \eqref{eq:rl_smooth_step} with \eqref{eq:inner_bd} yields
\begin{equation}
R(\boldsymbol\beta^{k+1})
\le
R(\boldsymbol\beta^{k})
-\left(\frac{1}{\eta}-\frac{L}{2}\right)\|\Delta^k\|^2.
\label{eq:basic_descent}
\end{equation}
Under the step-size condition $0<\eta\le \frac{1}{L}$,
\begin{equation}
\frac{1}{\eta}-\frac{L}{2}
=\frac{2-L\eta}{2\eta}
\ge
\frac{1}{2\eta}.
\end{equation}
Plugging this into \eqref{eq:basic_descent} gives
\begin{equation}
R(\boldsymbol\beta^{k+1})
\le
R(\boldsymbol\beta^{k})
-\frac{1}{2\eta}\|\Delta^k\|^2.
\label{eq:descent_delta}
\end{equation}

Now relate $\Delta^k$ to the projected gradient mapping.
By definition of $G_\eta$ and the PGD update,
\begin{equation}
G_\eta(\boldsymbol\beta^{k})
=
\frac{1}{\eta}\Big(\boldsymbol\beta^{k}-\boldsymbol\beta^{k+1}\Big)
=
-\frac{1}{\eta}\Delta^k,
\end{equation}
hence $\|\Delta^k\|^2=\eta^2\|G_\eta(\boldsymbol\beta^{k})\|^2$. Substituting this into \eqref{eq:descent_delta} yields
\begin{equation}
R(\boldsymbol\beta^{k+1})
\le
R(\boldsymbol\beta^{k})
-\frac{\eta}{2}\|G_\eta(\boldsymbol\beta^{k})\|^2,
\end{equation}
which proves the one-step descent inequality.

Summing the descent inequality from $k=0$ to $K$ gives the telescoping bound
\begin{equation}
\frac{\eta}{2}\sum_{k=0}^{K}\|G_\eta(\boldsymbol\beta^{k})\|^2
\le
R(\boldsymbol\beta^{0})-R(\boldsymbol\beta^{K+1})
\le
R(\boldsymbol\beta^{0})-\inf_{\boldsymbol\beta\in\mathcal{B}}R(\boldsymbol\beta).
\end{equation}
Letting $K\to\infty$ yields
\begin{equation}
\sum_{k=0}^{\infty}\|G_\eta(\boldsymbol\beta^{k})\|^2
\le
\frac{2}{\eta}\Big(R(\boldsymbol\beta^{0})-\inf_{\boldsymbol\beta\in\mathcal{B}}R(\boldsymbol\beta)\Big)
<
\infty.
\end{equation}
Since each term $\|G_\eta(\boldsymbol\beta^{k})\|^2\ge 0$ and the series converges, it follows that
\begin{equation}
\|G_\eta(\boldsymbol\beta^{k})\|\to 0.
\end{equation}

Finally, we show the equivalence between $G_\eta(\boldsymbol\beta^\star)=0$ and the projected first-order stationarity condition.
First, $G_\eta(\boldsymbol\beta^\star)=0$ is equivalent to
\begin{equation}
\boldsymbol\beta^\star=\mathrm{Proj}_{\mathcal{B}}(\boldsymbol\beta^\star-\eta\nabla R(\boldsymbol\beta^\star)).
\label{eq:fixed_point_proj}
\end{equation}
Applying the projection variational inequality \eqref{eq:proj_vi} at $\boldsymbol z^\star=\boldsymbol\beta^\star-\eta\nabla R(\boldsymbol\beta^\star)$ and $\boldsymbol\beta^\star=\mathrm{Proj}_{\mathcal{B}}(\boldsymbol z^\star)$ gives, for all $\boldsymbol u\in\mathcal{B}$,
\begin{equation}
\left\langle \boldsymbol\beta^\star-(\boldsymbol\beta^\star-\eta\nabla R(\boldsymbol\beta^\star)),\ \boldsymbol u-\boldsymbol\beta^\star\right\rangle
=
\eta\left\langle \nabla R(\boldsymbol\beta^\star),\ \boldsymbol u-\boldsymbol\beta^\star\right\rangle
\ge 0.
\end{equation}
Since $\eta>0$, this is equivalent to
\begin{equation}
\left\langle \nabla R(\boldsymbol\beta^\star),\ \boldsymbol u-\boldsymbol\beta^\star\right\rangle \ge 0,\qquad \forall\,\boldsymbol u\in\mathcal{B}.
\end{equation}
Conversely, if the above inequality holds for all $\boldsymbol u\in\mathcal{B}$, then $\boldsymbol\beta^\star$ satisfies the projection optimality condition for $\min_{\boldsymbol u\in\mathcal{B}}\frac{1}{2}\|\boldsymbol u-(\boldsymbol\beta^\star-\eta\nabla R(\boldsymbol\beta^\star))\|^2$, hence \eqref{eq:fixed_point_proj} holds, and therefore $G_\eta(\boldsymbol\beta^\star)=0$.

\end{proof}

Then we provide a detailed proof of Theorem~3.2 below.

\begin{proof}
Write
\begin{equation}
\mu=\sqrt{\bar{\alpha}_t}\,\mathbf{x}_0,\quad
\mu'=\sqrt{\bar{\alpha}'_t}\,\mathbf{x}_0,\quad
v=1-\bar{\alpha}_t,\quad
v'=1-\bar{\alpha}'_t.
\end{equation}
Then $q_{\boldsymbol\beta}(\mathbf{x}_t\mid \mathbf{x}_0)=\mathcal{N}(\mu,v\mathbf{I})$ and
$q_{\boldsymbol\beta'}(\mathbf{x}_t\mid \mathbf{x}_0)=\mathcal{N}(\mu',v'\mathbf{I})$ in $\mathbb{R}^{D}$.

For $p=\mathcal{N}(\mu',v'\mathbf{I})$ and $q=\mathcal{N}(\mu,v\mathbf{I})$,
\begin{equation}
D_{\mathrm{KL}}(p\|q)
=
\frac{1}{2}\left[
D\left(\frac{v'}{v}-1-\log\frac{v'}{v}\right)
+
\frac{1}{v}\|\mu'-\mu\|_2^2
\right].
\label{eq:kl_iso}
\end{equation}
Under $\bar{\alpha}_t,\bar{\alpha}'_t\in[a,1-a]$, we have $v,v'\in[a,1-a]$.

Let $r=\frac{v'}{v}$. Since $v,v'\in[a,1-a]$,
\begin{equation}
r\in\left[\frac{a}{1-a},\frac{1-a}{a}\right].
\end{equation}
Define $\phi(r)=r-1-\log r$. Then $\phi(1)=\phi'(1)=0$ and $\phi''(r)=1/r^{2}$.
By Taylor's theorem, for some $\xi$ between $1$ and $r$,
\begin{equation}
\phi(r)=\frac{\phi''(\xi)}{2}(r-1)^2=\frac{1}{2\xi^{2}}(r-1)^2
\le
\frac{1}{2}\left(\frac{1-a}{a}\right)^{2}(r-1)^2,
\label{eq:phi_bd}
\end{equation}
where we used $\xi\ge \frac{a}{1-a}$.
Moreover,
\begin{equation}
r-1=\frac{v'-v}{v}=\frac{(\bar{\alpha}_t-\bar{\alpha}'_t)}{v},
\qquad
|r-1|\le \frac{|\bar{\alpha}_t-\bar{\alpha}'_t|}{a},
\label{eq:r_minus_1}
\end{equation}
since $v\ge a$. Combining \eqref{eq:phi_bd} and \eqref{eq:r_minus_1} gives
\begin{equation}
\phi(r)
\le
\frac{(1-a)^2}{2a^{4}}\,|\bar{\alpha}_t-\bar{\alpha}'_t|^{2}.
\end{equation}
Plugging this into \eqref{eq:kl_iso} yields the bound for the first KL component:
\begin{equation}
\frac{1}{2}D\,\phi(r)
\le
\frac{D(1-a)^2}{4a^{4}}\,|\bar{\alpha}_t-\bar{\alpha}'_t|^{2}.
\label{eq:var_part}
\end{equation}

We have
\begin{equation}
\|\mu'-\mu\|_2^2
=
\left(\sqrt{\bar{\alpha}'_t}-\sqrt{\bar{\alpha}_t}\right)^2\|\mathbf{x}_0\|_2^2.
\end{equation}
Since $\bar{\alpha}_t,\bar{\alpha}'_t\in[a,1-a]$, function $g(u)=\sqrt{u}$ has derivative
$g'(u)=\frac{1}{2\sqrt{u}}\le \frac{1}{2\sqrt{a}}$ on $[a,1-a]$. By the mean value theorem,
\begin{equation}
\left|\sqrt{\bar{\alpha}'_t}-\sqrt{\bar{\alpha}_t}\right|
\le
\frac{1}{2\sqrt{a}}\,|\bar{\alpha}'_t-\bar{\alpha}_t|.
\end{equation}
Therefore,
\begin{equation}
\|\mu'-\mu\|_2^2
\le
\frac{1}{4a}\,|\bar{\alpha}'_t-\bar{\alpha}_t|^2\,\|\mathbf{x}_0\|_2^2.
\end{equation}
Using $v\ge a$ in \eqref{eq:kl_iso}, the second KL component is bounded by
\begin{equation}
\frac{1}{2}\cdot \frac{1}{v}\|\mu'-\mu\|_2^2
\le
\frac{1}{2}\cdot \frac{1}{a}\cdot \frac{1}{4a}\,|\bar{\alpha}'_t-\bar{\alpha}_t|^2\,\|\mathbf{x}_0\|_2^2
=
\frac{\|\mathbf{x}_0\|_2^2}{8a^{2}}\,|\bar{\alpha}'_t-\bar{\alpha}_t|^2.
\label{eq:mean_part}
\end{equation}

Recall $\bar{\alpha}_t=\prod_{s=1}^{t}(1-\beta_s)$ and $\bar{\alpha}'_t=\prod_{s=1}^{t}(1-\beta'_s)$.
Let
\begin{equation}
S=\log\bar{\alpha}_t=\sum_{s=1}^t \log(1-\beta_s),\qquad
S'=\log\bar{\alpha}'_t=\sum_{s=1}^t \log(1-\beta'_s).
\end{equation}
For each $s$, by the mean value theorem applied to $\log(1-u)$, there exists $\tilde{\beta}_s$ between $\beta_s$ and $\beta'_s$ such that
\begin{equation}
\left|\log(1-\beta'_s)-\log(1-\beta_s)\right|
=
\frac{|\beta'_s-\beta_s|}{1-\tilde{\beta}_s}
\le
\frac{|\beta'_s-\beta_s|}{1-\beta_{\max}}.
\end{equation}
Summing over $s=1,\ldots,t$ gives
\begin{equation}
|S'-S|
\le
\frac{t}{1-\beta_{\max}}\,\|\boldsymbol\beta'-\boldsymbol\beta\|_\infty.
\label{eq:log_alpha_bd}
\end{equation}
Finally, since $\bar{\alpha}_t,\bar{\alpha}'_t\in(0,1]$, we have $S,S'\le 0$ and hence $e^{\max(S,S')}\le 1$.
Using the mean value theorem for the exponential function,
\begin{equation}
|\bar{\alpha}'_t-\bar{\alpha}_t|
=
|e^{S'}-e^{S}|
=
e^{\xi}\,|S'-S|
\le
|S'-S|
\le
\frac{t}{1-\beta_{\max}}\,\|\boldsymbol\beta'-\boldsymbol\beta\|_\infty,
\label{eq:alpha_bd}
\end{equation}
for some $\xi$ between $S$ and $S'$.

Combining \eqref{eq:kl_iso}, \eqref{eq:var_part}, and \eqref{eq:mean_part} yields
\begin{equation}
D_{\mathrm{KL}}\!\left(q_{\boldsymbol\beta'}(\mathbf{x}_t\mid \mathbf{x}_0)\ \|\ q_{\boldsymbol\beta}(\mathbf{x}_t\mid \mathbf{x}_0)\right)
\le
\left[
\frac{D(1-a)^2}{4a^{4}}
+
\frac{\|\mathbf{x}_0\|_2^2}{8a^{2}}
\right]
|\bar{\alpha}'_t-\bar{\alpha}_t|^2.
\end{equation}
Applying \eqref{eq:alpha_bd} gives
\begin{equation}
D_{\mathrm{KL}}\!\left(q_{\boldsymbol\beta'}(\mathbf{x}_t\mid \mathbf{x}_0)\ \|\ q_{\boldsymbol\beta}(\mathbf{x}_t\mid \mathbf{x}_0)\right)
\le
\left(\frac{t}{1-\beta_{\max}}\right)^{2}
\left[
\frac{D(1-a)^{2}}{4a^{4}}
+
\frac{\|\mathbf{x}_0\|_{2}^{2}}{8a^{2}}
\right]
\|\boldsymbol\beta'-\boldsymbol\beta\|_{\infty}^{2},
\end{equation}
\end{proof}

\section{Experiment Details}

\subsection{Dataset}

\label{dataset}

\paragraph{Datasets.}
We conduct experiments on widely used real-world time series datasets, including (1) Electricity~\footnote{\url{https://archive.ics.uci.edu/ml/datasets/ElectricityLoadDiagrams20112014}}, which records hourly electricity consumption of 321 customers spanning 2012--2014. (2) ILI~\footnote{\url{https://gis.cdc.gov/grasp/fluview/fluportaldashboard.html}}, which reports the weekly ratio of influenza-like illness cases to total patient visits released by the U.S.\ CDC over 2002--2021. (3) ETT~\footnote{\url{https://github.com/zhouhaoyi/ETDataset}}~\cite{informer}, which includes data from electricity transformers, such as load and oil temperature, recorded every 15 minutes or hourly intervals between July 2016 and July 2018. (4) Traffic~\footnote{\url{https://www.nrel.gov/}}, which provides hourly freeway occupancy rates measured by 862 sensors in the San Francisco Bay Area during January 2015--December 2016. (5) SolarEnergy~\footnote{\url{http://www.nrel.gov/grid/solar-power-data.html}}, which contains solar power generation readings from 137 photovoltaic plants in Alabama collected in 2007.

\subsection{Metrics}

\label{metrics}

\paragraph{CRPS.}
We evaluate probabilistic forecasts using the continuous ranked probability score (CRPS) \citep{crps}, which measures the discrepancy between a predictive cumulative distribution function (CDF) $F$ and an observation $x$:
\begin{equation}
\mathrm{CRPS}(F,x)=\int_{\mathbb{R}}\big(F(z)-\mathbb{I}[z\ge x]\big)^2\,\mathrm{d}z,
\end{equation}
where $\mathbb{I}[\cdot]$ is the indicator function.
In practice, $F$ is not available in closed form, so we approximate it with $S$ samples drawn from the model and compute CRPS based on this Monte Carlo approximation. Unless otherwise specified, we use $S=100$ samples per forecast.

\paragraph{MAE and MSE.}
For point forecasting accuracy, we also report mean absolute error (MAE) and mean squared error (MSE).
Given the ground-truth sequence $\{x_t\}_{t=1}^{H}$ and the corresponding point prediction $\{\hat{x}_t\}_{t=1}^{H}$, the metrics are defined as
\begin{equation}
\mathrm{MAE}=\frac{1}{H}\sum_{t=1}^{H}\big|\hat{x}_t-x_t\big|,
\qquad
\mathrm{MSE}=\frac{1}{H}\sum_{t=1}^{H}\big(\hat{x}_t-x_t\big)^2.
\end{equation}
For multivariate forecasts, we compute the errors per variable and average them over variables and time steps.

\section{Other Results and Discussions}

\subsection{Spectral Flatness Evolution under Forward Process}
\label{subsec:spectral_flatness}

\paragraph{Computation of spectral flatness.}

\begin{figure}[htbp]
  \centering
  \includegraphics[width=0.8\linewidth]{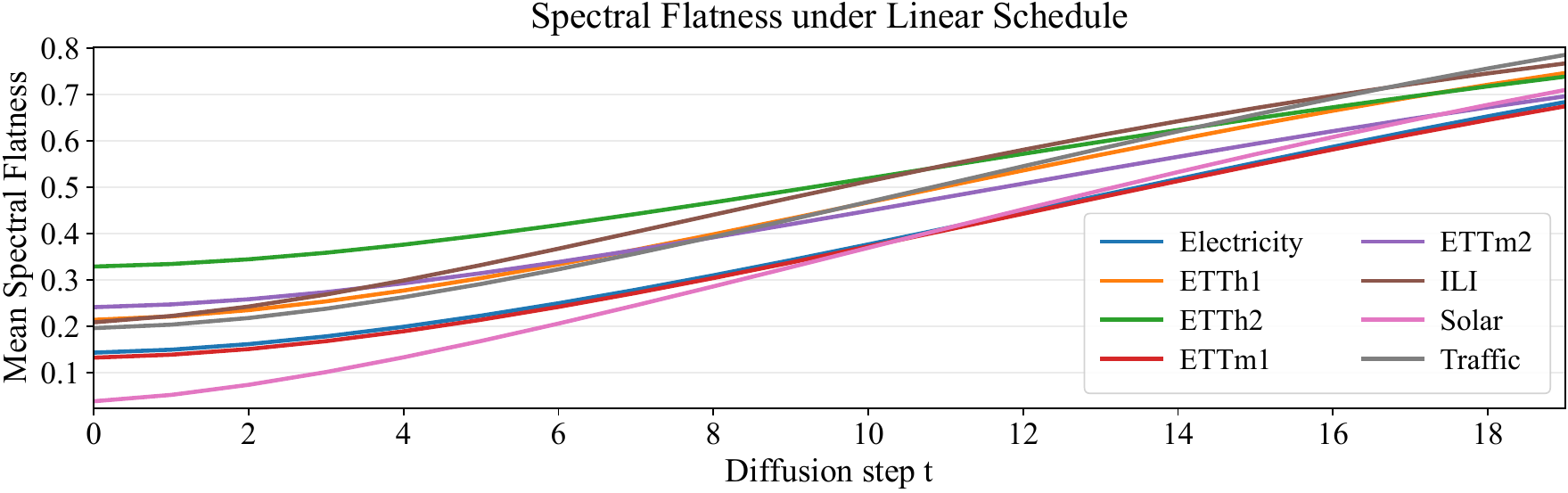}
  \caption{
  Evolution of mean spectral flatness along the forward diffusion trajectory under a linear noise schedule.
  }
  \label{fig:sfm_8datasets}
\end{figure}

Given $\mathbf{c}_t\in\mathbb{R}^{L\times d}$, we compute the one-sided spectrum
$\mathcal{F}(\mathbf{c}_t)=\mathrm{rFFT}(\mathbf{c}_t)\in\mathbb{C}^{F\times d}$
along the temporal axis, where $F=\lfloor L/2\rfloor+1$.
The variable-averaged power spectrum is
\begin{equation}
S_t(f)=\frac{1}{d}\sum_{i=1}^{d}\big|\mathcal{F}(\mathbf{c}_t)_{f,i}\big|^2,\qquad f=1,\ldots,F.
\end{equation}
and the normalized spectral mass is
\begin{equation}
p_t(f)=\frac{S_t(f)}{\sum_{j=1}^{F} S_t(j)}.
\end{equation}
We then define the spectral flatness statistic as the ratio between the geometric
and arithmetic means of the power spectrum:
\begin{equation}
\mathrm{SF}(\mathbf{c}_t)
=\frac{\exp\!\left(\frac{1}{F}\sum_{f=1}^{F}\log\big(S_t(f)+\epsilon\big)\right)}
{\frac{1}{F}\sum_{f=1}^{F}\big(S_t(f)+\epsilon\big)},\label{flatness}
\end{equation}
where $\epsilon>0$ is a small constant.
We measure spectral flatness by the KL divergence to the uniform distribution,
\begin{equation}
\mathcal{L}_{\mathrm{flat}}(t)
=
D_{\mathrm{KL}}\!\left(p_t(f)\,\big\|\,\mathcal{U}(f)\right),
\qquad
\mathcal{U}(f)=\frac{1}{F},
\end{equation}
which is applied at $t=T$ for the terminal flatness constraint.

To examine how forward diffusion alters the frequency structure of instance-normalized time series, we follow the evolution of spectral flatness along the noising trajectory. Spectral flatness increases when spectral energy becomes less concentrated, making it a concise indicator of schedule-induced degradation in the frequency domain.

\begin{figure}[htbp]
  \centering
  \includegraphics[width=0.8\linewidth]{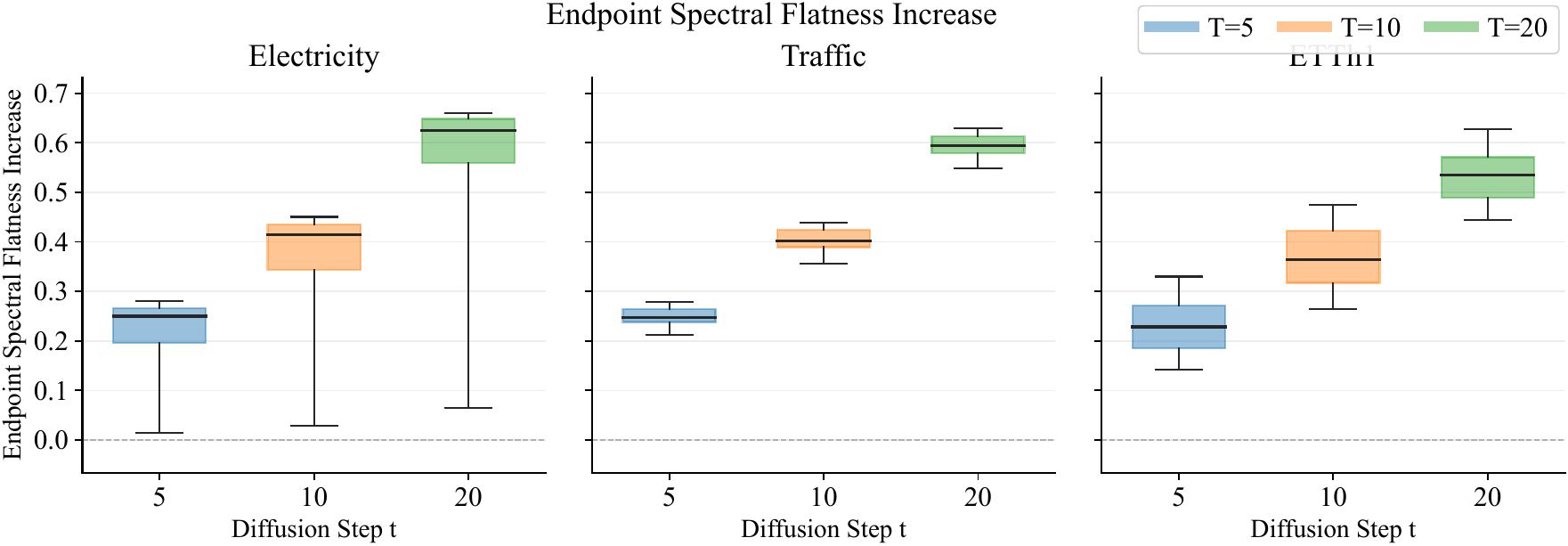}
  \caption{
  Increase in endpoint spectral flatness under different diffusion steps $T$.
  }
  \label{fig:endpoint_damage}
\end{figure}

Fig.~\ref{fig:sfm_8datasets} presents the spectral flatness trends over diffusion steps under a linear schedule on eight datasets. Across diverse domains and dynamics, spectral flatness exhibits a clear upward trend as the diffusion progresses, suggesting that noising generally drives sequences toward flatter spectra and weakens structured frequency patterns. Meanwhile, datasets differ in their starting levels and growth behaviors, revealing varying sensitivity to spectral corruption. In addition, the endpoint behavior is influenced by the diffusion length $T$. Fig.~\ref{fig:endpoint_damage} shows that longer diffusion trajectories tend to produce flatter endpoints across datasets, implying stronger disruption to the underlying frequency structure.

\subsection{Computation Efficiency}

\begin{table}[htbp]
  \centering
  \caption{Comparison of computational cost and forecasting accuracy on the Traffic dataset. Peak memory and each iteration runtime are reported for training and inference.}
  \label{tab:overall_cost_perf}
  \footnotesize
  \setlength{\tabcolsep}{5pt}
  \renewcommand{\arraystretch}{1.12}
  \resizebox{0.8\linewidth}{!}{%
  \begin{tabular}{
    l
    S[table-format=4.2]
    S[table-format=3.2]
    S[table-format=2.2]
    S[table-format=2.2]
    S[table-format=1.4]
    S[table-format=1.4]
    S[table-format=1.4]
  }
    \toprule
    {Model} &
    {Mem. Tr. (MB)} & {Mem. Inf. (MB)} &
    {Time Tr. (ms)} & {Time Inf. (ms)} &
    {CRPS} & {MAE} & {MSE} \\
    \midrule
    CSDI          & 3512.09 & 452.15 & 77.81 & 29.14 & 0.5240 & 0.7294 & 0.9779 \\
    TimeDiff      & 408.24  & 111.34 & 16.02 & \textbf{4.14}  & 0.4683 & 0.4824 & 0.5170 \\
    DiffusionTS   & 2069.95 & 235.86 & 61.12 & 19.39 & 0.6024 & 0.7738 & 1.0889 \\
    NsDiff        & 1526.68 & 493.07 & 34.10 & 15.71 & 0.4021 & 0.5382 & 0.6186 \\
    \textbf{StaTS} & \textbf{27.74} & \textbf{26.02} & \textbf{10.61} & 4.82
                   & \textbf{0.3491} & \textbf{0.4584} & \textbf{0.4921} \\
    \bottomrule
  \end{tabular}}
  \label{tab:overall_cost_perf}
\end{table}

\paragraph{Overall efficiency and accuracy comparison.}
Table~\ref{tab:overall_cost_perf} summarizes both computational cost and forecasting quality on Traffic. 
StaTS consistently delivers the strongest forecasting performance while maintaining substantially lower memory consumption than diffusion-based baselines during both training and inference. 
This result indicates that StaTS improves distributional forecasting quality while maintaining low runtime and memory overhead compared to diffusion-style methods.

\begin{table}[htbp]
  \centering
  \caption{Computational efficiency of StaTS across diffusion steps $T$ on ETTm1 and Traffic. Peak memory and each iteration time are reported for training and inference.}
  \label{tab:stats_steps_cost}
  \footnotesize
  \setlength{\tabcolsep}{3.5pt}
  \renewcommand{\arraystretch}{1.08}

  \resizebox{0.9\linewidth}{!}{%
  \begin{tabular}{c
                  S[table-format=2.4] S[table-format=2.4] S[table-format=1.4] S[table-format=1.4]
                  @{\hspace{6pt}}
                  S[table-format=3.4] S[table-format=2.4] S[table-format=1.4] S[table-format=1.4]}
    \toprule
    & \multicolumn{4}{c}{\textbf{ETTm1}} & \multicolumn{4}{c}{\textbf{Traffic}} \\
    \cmidrule(lr){2-5}\cmidrule(lr){6-9}
    {$T$} &
    {Mem. Tr. (MB)} & {Mem. Inf. (MB)} & {Time Tr. (ms)} & {Time Inf. (ms)} &
    {Mem. Tr. (MB)} & {Mem. Inf. (MB)} & {Time Tr. (ms)} & {Time Inf. (ms)} \\
    \midrule
    10  & 27.2852 & 25.4229 & 5.9771 & 2.0699 & 148.4170 & 97.5752 & 6.7862 & 1.6952 \\
    20  & 27.4082 & 25.5830 & 5.5442 & 2.1560 & 148.5342 & 97.7314 & 4.7156 & 1.7873 \\
    50  & 27.7832 & 26.0674 & 6.0914 & 2.1784 & 148.8887 & 98.2021 & 5.6521 & 1.8300 \\
    100 & 28.4043 & 26.8721 & 5.3611 & 2.1985 & 149.4805 & 98.9873 & 6.3710 & 1.8538 \\
    \bottomrule
  \end{tabular}%
  }
  \label{tab:stats_steps_cost}
\end{table}

\paragraph{Cost sensitivity to diffusion steps.}
\label{Cost sensitivity to diffusion steps}
Table~\ref{tab:stats_steps_cost} further examines how the computational cost of StaTS changes with the diffusion steps \(T\) on ETTm1 and Traffic. 
Across \(T\in\{10,20,50,100\}\), peak memory and time per iteration remain largely stable, indicating that StaTS is not strongly affected by the choice of step number in practice. 
In addition, the difference between the two datasets is mainly driven by their scale, with Traffic requiring more memory than ETTm1, while the trends across \(T\) are consistent on both datasets.

\begin{figure}[htbp]
  \centering
  \includegraphics[width=0.8\linewidth]{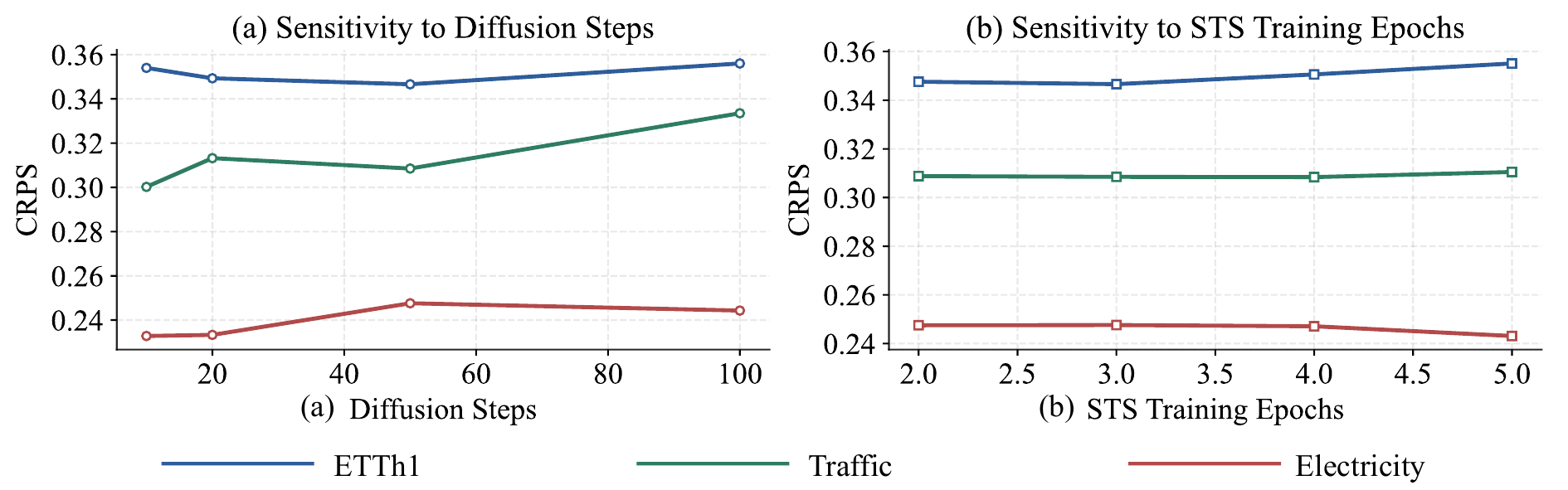}
  \caption{
  Parameter Sensitivity analysis of key hyperparameters. 
  }
  \label{fig:para_ana}
\end{figure}

\subsection{Parameters Sensitivity Analysis}

Fig.~\ref{fig:para_ana} evaluates the sensitivity of our framework to two key hyperparameters: the diffusion sampling steps $T$ and the number of training epochs used for the Spectral Trajectory Scheduler (STS).

\paragraph{Sensitivity to diffusion steps $T$.}
As shown in Fig.~\ref{fig:para_ana} (a), varying $T$ over $\{10, 20, 50, 100\}$ leads to only mild fluctuations in CRPS across all three datasets, without a consistent monotonic trend. This indicates that the proposed method is not overly sensitive to the sampling granularity. In particular, ETTh1 attains its best performance at a moderate $T$, while Traffic exhibits a slightly larger yet still controlled variation as $T$ increases. Electricity remains nearly unchanged throughout the tested range. Overall, these results suggest that stable probabilistic forecasting quality can be achieved without carefully tuning the diffusion step number.

\paragraph{Sensitivity to STS training epochs.}
Fig.~\ref{fig:para_ana} (b) further studies the impact of the training epochs allocated to STS. When increasing the STS training epochs from 2 to 5, the CRPS values remain largely stable on all datasets, with only marginal changes and consistent relative trends. This behavior suggests that STS converges reliably and does not require extensive training or delicate epoch tuning to learn an effective spectral scheduling trajectory. In practice, a small number of STS training epochs is sufficient to obtain robust performance, which reduces the hyperparameter tuning burden and improves the usability of our approach.

\subsection{Visualization}
\label{visualization}

Fig.~\ref{fig:visualization_etth1} and Fig.~\ref{fig:visualization_traffic} visualize the min-max prediction intervals together with the corresponding point forecasts on representative cases. Overall, several baselines either produce overly smoothed mean trajectories that fail to track the phase and amplitude of the target dynamics, or rely on excessively wide intervals to cover the ground truth, which weakens the practical interpretability of uncertainty.

On ETTh1 dataset, where the future evolution is relatively unstable and exhibits pronounced local fluctuations, StaTS maintains a more stable alignment with the underlying waveform, avoiding the drift or collapse patterns observed in competing methods. On Traffic dataset, StaTS achieves accurate tracking with noticeably tighter intervals, providing more informative uncertainty bounds and hence more actionable forecasts for real-world decision making.

\begin{figure}[htbp]
  \centering
  \includegraphics[width=0.9\linewidth]{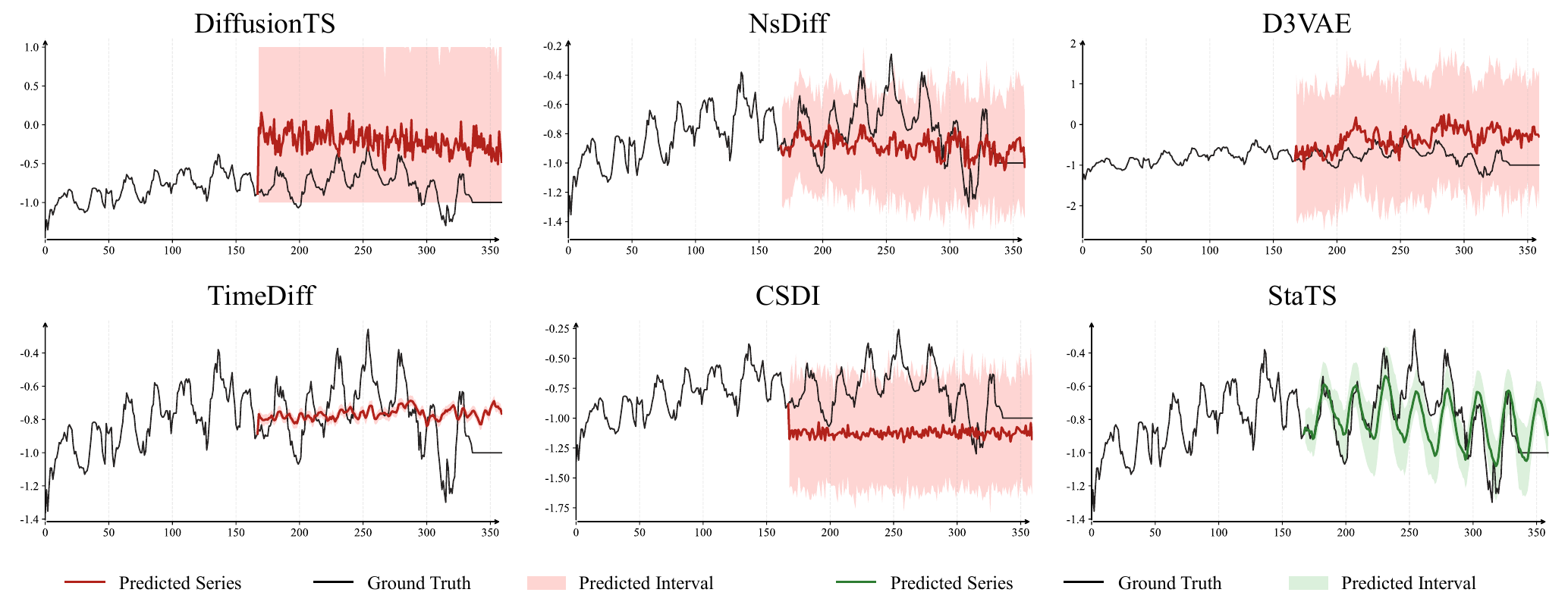}
  \caption{
  Visualization of min--max prediction intervals and point forecasts on the ETTh1 dataset.
  }
  \label{fig:visualization_etth1}
\end{figure}

\begin{figure}[htbp]
  \centering
  \includegraphics[width=0.9\linewidth]{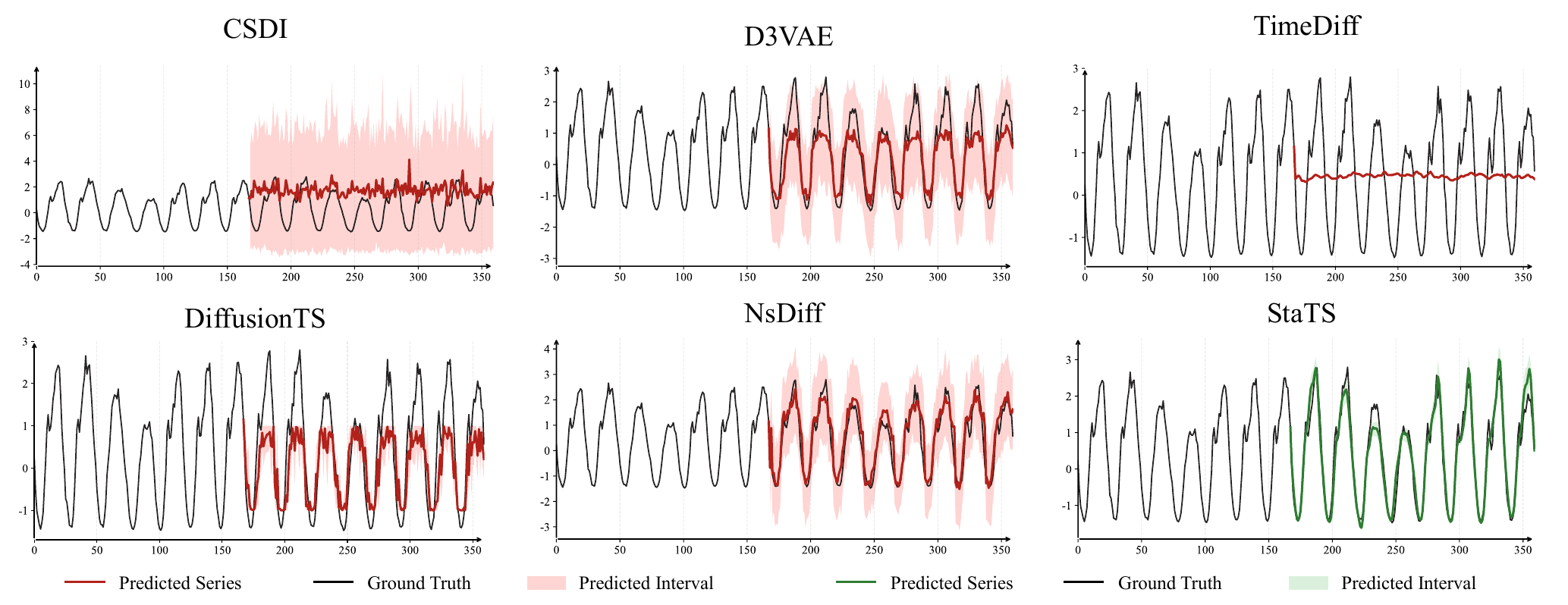}
  \caption{
  Visualization of min--max prediction intervals and point forecasts on the Traffic dataset.
  }
  \label{fig:visualization_traffic}
\end{figure}

\subsection{Reproducibility}
We release all code and notebooks at \url{https://anonymous.4open.science/r/StaTS-EC5F/}. The relevant data are provided in the supplementary materials.

\subsection{Key Hyperparameters and Default Settings}
\label{setting}
We report the key hyperparameters for the Spectral Trajectory Scheduler (STS) and the Spectral Conditioned Denoising Module (SCDM).
For STS, we use the following default weights:
$\lambda_{\mathrm{smooth}}=5$,
$\lambda_{\mathrm{init}}=0.5$,
$\lambda_{\mathrm{end}}=0.5$,
$\lambda_{\mathrm{bar}}=5\times 10^{-3}$,
$\lambda_{\mathrm{flatness}}=0.5$,
and we additionally set $\lambda_{\mathrm{obj}}=0.01$ for the forecasting-oriented objective that directly steers the learned trajectory toward improved prediction performance. These values are selected based on validation performance and are kept fixed across datasets unless otherwise stated. In SCDM, the spectral distortion estimation explicitly clips the normalized spectral distortion ratio to stabilize conditioning and avoid extremely large gating responses.
We set the clipping bounds to $r_{\min}=-10$ and $r_{\max}=10$ by default.
Moreover, the spectral conditioning branch adopts a multi-band design to capture coarse-grained frequency-dependent distortions; we use $B=2$ frequency bands as the default setting.

We tune the above hyperparameters on the validation split with a light grid centered around the default configuration:
$\lambda_{\mathrm{smooth}}\in\{1,2,5,10\}$,
$\lambda_{\mathrm{init}}\in\{0.1,0.5,1.0\}$,
$\lambda_{\mathrm{end}}\in\{0.1,0.5,1.0\}$,
$\lambda_{\mathrm{bar}}\in\{10^{-4},5\times 10^{-4},10^{-3},5\times 10^{-3},10^{-2}\}$,
$\lambda_{\mathrm{flatness}}\in\{0.1,0.5,1.0\}$,
and $\lambda_{\mathrm{obj}}\in\{10^{-3},5\times 10^{-3},10^{-2},5\times 10^{-2}\}$.
For the distortion clipping, we consider $(r_{\min},r_{\max})\in\{(-5,5),(-10,10),(-20,20)\}$, and for the number of frequency bands we search $B\in\{1,2,4\}$.

\section{Related Works}

\subsection{Diffusion models for Time Series Forecasting}

Diffusion models have emerged as a powerful paradigm owing to their strong capacity to capture temporal uncertainty. This field has attracted substantial attention in time series forecasting~\cite{fredformer, zhang2025towards, cheng2025comprehensive} in recent years. Early studies mainly explored how to adapt the diffusion paradigm to core time series forecasting architectures. TimeGrad~\cite{timegrad} was the first to integrate an autoregressive forecasting framework with a denoising diffusion process, enabling stepwise generation of future predictive distributions. In contrast, CSDI~\cite{csdi} adopted a non-autoregressive, score-based generative approach to more thoroughly capture spatiotemporal dependencies in multivariate sequences. Building on these foundations, later work gradually shifted toward improving predictive performance and practical applicability through stronger conditioning and tailored mechanisms. TimeDiff~\cite{timediff} improved pattern modeling via future mixup and autoregressive initialization. TSDiff~\cite{tsdiff} introduced a self-guiding strategy to enhance short-horizon accuracy. NsDiff~\cite{nsdiff} modeled non-stationarity with a location scale noise formulation, which supports more adaptive noise scheduling. CDPM~\cite{CDPM} adapts diffusion models to learn the full conditional distribution of future, enabling uncertainty-aware and flexible generation beyond point forecasts.

Meanwhile, diffusion modeling has been extended to continuous-time settings and more challenging sampling regimes. Score-based approaches such as ScoreGrad~\cite{scoregrad} use stochastic differential equations (SDEs) to support continuous-time forecasting, better handling irregular sampling and varying time intervals. In parallel, the high inference cost of diffusion models has motivated efficiency oriented designs. Latent diffusion~\cite{rombach2022high} reduces computation by running the diffusion process in a low-dimensional latent space, an idea validated in precipitation forecasting and time series prediction by LDCast~\cite{LDcast} and Latent Diffusion Transformers~\cite{latent1}, which achieve faster inference while maintaining generation quality.

Diffusion models elevate time series forecasting from point estimation to conditional distribution modeling. They enable the generation of diverse plausible futures and provide explicit uncertainty quantification, thereby enhancing forecast credibility and increasing their utility for risk-aware decision-making.

\subsection{Noise Schedules and Efficient Sampling}

\begin{figure}[htbp]
  \centering
  \includegraphics[width=0.9\linewidth]{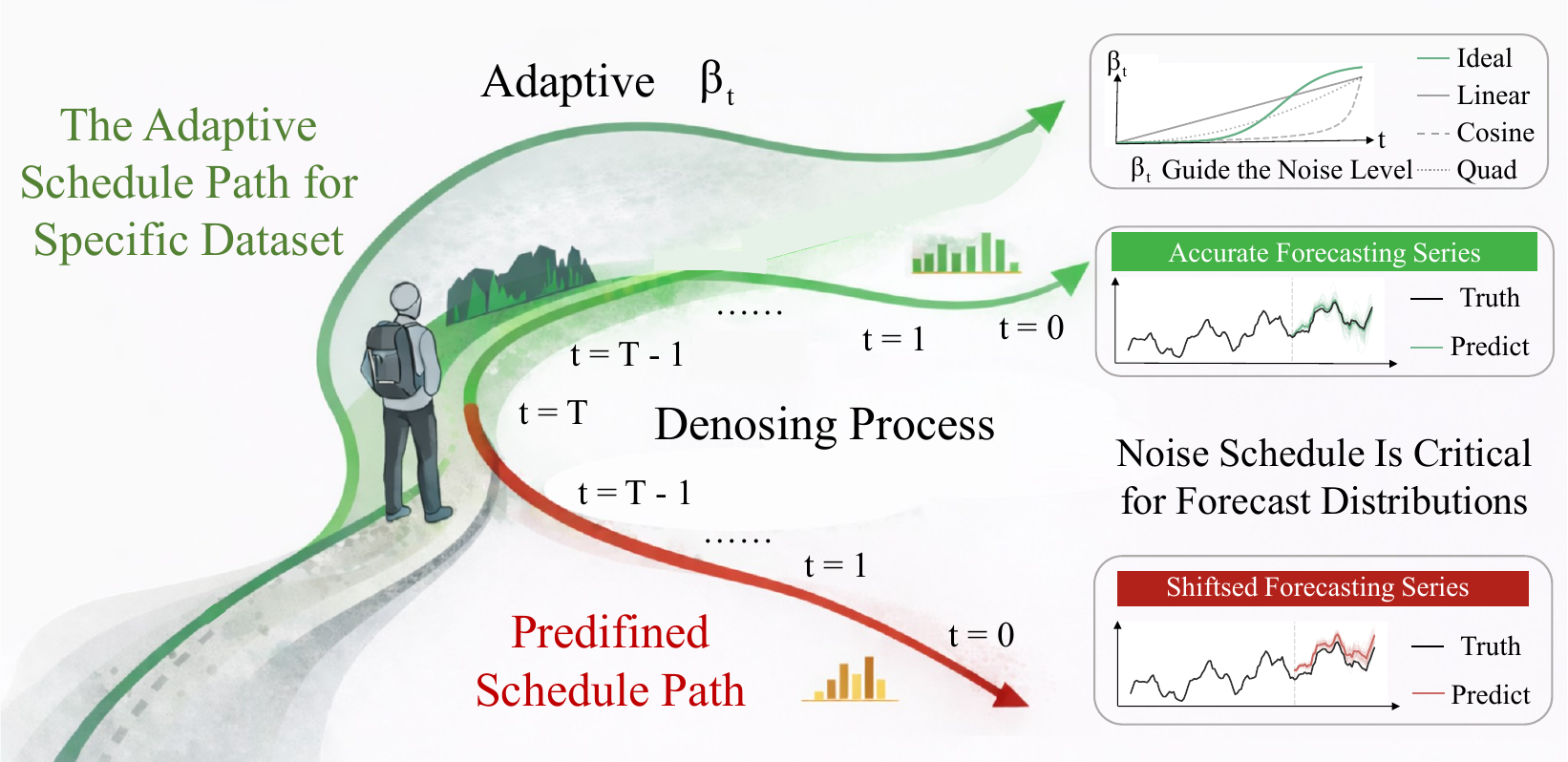}
  \caption{
  Improved scheduling yields more faithful predictive distributions and consistently better forecasting performance.
  }
  \label{fig:flsm}
\end{figure}

Noise scheduling determines the SNR evolution over diffusion steps, which in turn governs the extent of information preservation in the forward process and the difficulty of the reverse denoising. It therefore has a direct and often decisive impact on generation quality and sampling efficiency. Early diffusion models predominantly adopted hand-crafted schedules, such as linear or cosine schemes~\cite{R21}, as well as rescaled variants that enforce a zero terminal SNR~\cite{R22}. However, these choices are largely empirical and often transfer poorly across datasets and domains. Recent evidence further suggests that commonly used schedules and step-selection heuristics can be systematically suboptimal, owing to suboptimal SNR allocation and discretization error, which ultimately limits performance and numerical stability~\cite{R23}.

To overcome the limitations of hand-crafted schedules, a growing body of work learns or optimizes noise scheduling directly from data or task objectives. For example, MuLAN learns an adaptive noising process for multivariate diffusion from a variational modeling perspective~\cite{R24}. Learning to Schedule formulates schedule selection as an explicit optimization problem and uses dynamic programming or reinforcement learning to search for high-performing discretization schemes~\cite{R25}. To improve sampling efficiency, solver-aware timestep optimization chooses non-uniform timesteps by accounting for the numerical error of diffusion ODE solvers, which helps retain high generation quality with only a few sampling steps~\cite{R26}. Complementarily, Score-Optimal Diffusion Schedules models the trade-off between score estimation error and discretization cost, and derives objectives that better align with diffusion dynamics, enabling a more efficient allocation of computational resources~\cite{R27}.

In the field of time series, recent studies have started to incorporate structural priors into schedule design more explicitly. For instance, ANT adaptively selects noise schedules based on the statistical characteristics of the input series~\cite{ant}. Anisotropic diffusion models for time series further alter the diffusion transition operator, for example by introducing moving-average transitions, to better respect temporal dependency structures~\cite{R29}. Consequently, noise scheduling and timestep selection are no longer mere hyperparameter choices; they have become core algorithmic components that shape forecasting accuracy, uncertainty quantification, and compliance with practical inference budgets.

\section{Concurrent Submission Statement}
We also have a concurrent ICML submission that introduces PAFM, a perturbation-aware flow-based generator that injects localized perturbations into trajectories and leverages dual-path velocity modeling with flow-routing MoE decoding and velocity correction.
This framework StaTS instead focuses on diffusion-based probabilistic forecasting with learnable noise scheduling and a frequency-guided denoiser. The two submissions address different problem formulations and propose distinct methods; the note is included for transparency.

\end{document}